\newif\ifarxiv
\DeclareSIUnit{\microsecond}{\SIUnitSymbolMicro s}
 \newtheorem{assumption}{Assumption}
 \newtheorem{prob}{Problem}
\newcommand{\tup}[1]{\left(#1 \right)}
\newcommand{\set}[1]{\lbrace #1\rbrace}
\newcommand{\pr}{\mathbb{P}}
\newcommand{\expe}{\mathbb{E}}
\newcommand{\sem}[1]{\llbracket #1\rrbracket}
 \newcommand{\twodots}{\mathinner {\ldotp \ldotp}}
 \newcommand{\cyl}[1]{\mathit{Cyl}(#1)}
 \newcommand{\pref}{\prec}
 \newcommand{\init}{\mathsf{in}}
 \newcommand{\true}{\mathit{true}}
 \newcommand{\false}{\mathit{false}}
\newcommand{\seq}[1]{\vv{#1}}
\newcommand{\vbse}{BSE\xspace}
\newcommand{\quanvbse}{Quant\vbse}
\newcommand{\qualvbse}{Qual\vbse}
\newcommand{\mc}{\mathcal{M}}
\newcommand{\Q}{Q}
\newcommand{\trm}{M}
\newcommand{\paths}[2][*]{Q^#1(#2)}
\newcommand{\initd}{\lambda}
\newcommand{\taumix}{\ensuremath{\tau_{\mathsf{mix}}}}
\newcommand{\Out}{\Sigma}
\newcommand{\out}{w}
\newcommand{\rout}{W}
\newcommand{\OutFn}{\ell}
\newcommand{\outpaths}[2][*]{\Sigma^#1(#2)}
\newcommand{\aut}{\mathcal{A}}
\newcommand{\st}{\pi}
\newcommand{\monitor}{\mathcal{A}}
\newcommand{\verdict}{\Lambda}
\newcommand{\transrel}{T}
\algnewcommand{\IfThenElse}[3]{
  \State \algorithmicif\ #1\ \algorithmicthen\ #2\ \algorithmicelse\ #3}
\newcommand{\specset}{\Lambda}
\title{Monitoring Algorithmic Fairness\\ under Partial Observations}
\author{Thomas A.\ Henzinger \and 
		Konstantin Kueffner \and
		Kaushik Mallik}
\institute{Institute of Science and Technology Austria (ISTA)}
\begin{document}

\newpage
	\maketitle
	
	\begin{abstract}
		    As AI and machine-learned software are used increasingly for making decisions that affect humans, it is imperative
    that they remain fair and unbiased in their decisions. To complement
    design-time bias mitigation measures, runtime verification
    techniques have been introduced recently to monitor the algorithmic fairness of 
    deployed systems. Previous monitoring techniques assume full
    observability of the states of the (unknown) monitored system. Moreover, they
    can monitor only fairness properties that are specified as arithmetic
    expressions over the probabilities of different events. In this work, we extend fairness monitoring to systems modeled as partially observed Markov
    chains (POMC), and to
    specifications containing arithmetic expressions over the expected values of
    numerical functions on event sequences. The only assumptions
    we make are that the underlying POMC is aperiodic and starts in the stationary distribution, with a bound on its mixing time being known. These
    assumptions enable us to estimate a given property for the entire
    distribution of possible executions of the monitored POMC, by observing only a single execution. Our
    monitors observe a long run of the system and, after each new
    observation, output updated PAC-estimates of how fair or biased the
    system is. The monitors are computationally lightweight and, using a prototype
    implementation, we demonstrate their effectiveness on several real-world
    examples.
	\end{abstract}


\section{Introduction}

Runtime verification complements traditional static verification techniques, by offering lightweight approaches for verifying properties of systems from a single long observed execution trace \cite{bartocci2018lectures}.
Recently, runtime verification was used to monitor biases in machine-learned decision-making softwares \cite{albarghouthi2019fairness,henzinger2023monitoring,henzinger2023runtime}.
Decision-making softwares are being increasingly used for making critical decisions affecting humans; example areas include judiciary \cite{chouldechova2017fair,dressel2018accuracy}, policing \cite{ensign2018runaway,lum2016predict}, and banking \cite{liu2018delayed}.
It is  important that these softwares are unbiased towards the protected attributes of humans, like gender and ethnicity.
However, they were shown to be biased on many occasions in the past
\ifarxiv \cite{dressel2018accuracy,lahoti2019ifair,obermeyer2019dissecting,scheuerman2019computers,seyyed2020chexclusion}.
\else
\cite{dressel2018accuracy,lahoti2019ifair,obermeyer2019dissecting,scheuerman2019computers}.
\fi
While many offline approaches were proposed for mitigating such biases 
\ifarxiv 							\cite{bellamy2019ai,wexler2019if,bird2020fairlearn,zemel2013learning,jagielski2019differentially,konstantinov2022fairness},
\else
\cite{bellamy2019ai,wexler2019if,bird2020fairlearn,zemel2013learning}
\fi
 runtime verification introduces a new complementary tool to oversee \emph{algorithmic fairness} of deployed decision-making systems \cite{albarghouthi2019fairness,henzinger2023monitoring,henzinger2023runtime}.
In this work, we extend runtime verification to monitor algorithmic fairness for a broader class of system models and a more expressive specification language.

Prior works on monitoring algorithmic fairness assumed that the given system is modeled as a Markov chain with unknown transition probabilities but with fully observable states \cite{albarghouthi2019fairness,henzinger2023monitoring}.
A sequence of states visited by the Markov chain represents a (randomized) sequence of events generated from the interaction of the decision-making agent and its environment.
The goal is to design a monitor that will observe one such long sequence of states, and, after observing every new state in the sequence, will compute an updated PAC-estimate of how fair or biased the system is.

In the prior works, the PAC guarantee on the output hinges on the full observability and the Markovian structure of the system \cite{albarghouthi2019fairness,henzinger2023monitoring,henzinger2023runtime}.
While this setup is foundational, it is also very basic, and is not fulfilled by many real-world examples.
Consider a lending scenario where at every step a bank (the decision-maker) receives the features (e.g., the age, gender, and ethnicity) of a loan applicant, and decides whether to grant or reject the loan.
To model this system using the existing setup, we would need to assume that the monitor can observe the full state of the system which includes all the features of every applicant.
In reality, the monitor will often be a third-party system, having only partial view of the system's states.

We address the problem of designing monitors when the systems are modeled using partially observed Markov chains (POMC) with unknown transition probabilities. 
The difficulty comes from the fact that a random observation sequence that is visible to the monitor may not follow a Markovian pattern, even though the underlying state sequence is Markovian.
We overcome this by making the assumption that the POMC starts in the stationary distribution, which in turn guarantees a certain uniformity in how the observations follow each other.
We argue that the stationarity assumption is fulfilled whenever the system has been running for a long time, which is suitable for long term monitoring of fairness properties.
With the help of a few additional standard assumptions on the POMC, like aperiodicity and the knowledge of a bound on the mixing time, we can compute PAC estimates on the degree of algorithmic fairness over the distribution of all runs of the system from a single monitored observation sequence.

Besides the new system model, we also introduce a richer specification language---called bounded specification expressions (\vbse).
\vbse-s can express many common algorithmic fairness properties from the literature, such as demographic parity \cite{dwork2012fairness}, equal opportunity \cite{hardt2016equality}, and disparate impact \cite{feldman2015certifying}.
Furthermore, \vbse-s can express new fairness properties which were not expressible using the previous formalism \cite{albarghouthi2019fairness,henzinger2023monitoring}.
In particular, \vbse-s can express quantitative fairness properties, including fair distribution of expected credit scores and fair distribution of expected wages across different demographic groups of the population; details can be found, respectively, in Ex.~\ref{ex:social fairness in lending} and \ref{ex:fair salary distribution in hiring} in Sec.~\ref{sec:BSE}.

The building block of a \vbse is an atomic function, which is a function that assigns bounded numerical values to observation sequences of a particular length.
Using an atomic function, we can express weighted star-free regular expressions (every word satisfying the given regular expression has a numerical weight), average response time-like properties, etc.
A \vbse can contain many different atomic functions combined together through a restricted set of arithmetic, relational, and logical operations.
We define two fragments of \vbse-s: 
The first one is called \quanvbse, which contains only arithmetic expressions over atomic functions, and whose semantic value is the expected value of the given expression over the distribution of runs of the POMC.
The second one is called \qualvbse, which turns the \quanvbse expressions into boolean expressions through relational (e.g., whether a \quanvbse expression is greater than zero) and logical operators (e.g., conjunction of two relational sentences), and whose semantic value is the expected truth or falsehood of the given expression over the distribution of runs of the POMC.

For any given \vbse, we show how to construct a monitor that observes a single long observation sequence generated by the given POMC with unknown transition probabilities, and after each observation outputs an updated numerical estimate of the actual semantic value of the \vbse for the observed system.
The heart of our approach is a PAC estimation algorithm for the semantic values of the atomic functions.
The main difficulty stems from the statistical dependence between any two consecutive observations, which is a side-effect of the partial observability of the states of the Markov chain, and prevents us from using the common PAC bounds that were used in the prior works that assumed full observability of the POMC states \cite{albarghouthi2019fairness,henzinger2023monitoring}.
We show how the problem can be cast as the statistical estimation problem of estimating the expected value of a function  over the states of a POMC which satisfies a certain bounded difference property.
This estimation problem can be solved using a version of McDiarmid's concentration inequality \cite{paulin2015concentration}, for which we need the additional assumptions that the given POMC is aperiodic and that a bound on its mixing time is known.
We use McDiarmid's inequality to find the PAC estimate of every individual atomic function of the given \vbse.
The individual PAC estimates can then be combined using known methods to obtain the overall PAC estimate of the given \vbse \cite{albarghouthi2019fairness}.

Our monitors are computationally lightweight, and produce reasonably tight PAC bounds of the monitored properties.
Using a prototype implementation, we present the effectiveness of our monitors on two different examples.
On a real-world example, we showed how our monitors can check if a bank has been fair in giving loans to individuals from two different demographic groups in the population, and on an academic example, we showed how our monitors' outputs improve as the known bound on the mixing time gets tighter.

\ifarxiv
	The proofs of the technical claims can be found in the appendices.
\else
	\new{
	The proofs of the technical claims are omitted due to limitation of space, and can be found in the longer version of the paper \cite{??}.
	}
\fi

\subsection{Related Work}
There are many works in AI and machine-learning which address how to eliminate or minimize decision biases in learned models through improved design principles
\ifarxiv \cite{mehrabi2021survey,dwork2012fairness,hardt2016equality,kusner2017counterfactual,kearns2018preventing,sharifi2019average,bellamy2019ai,wexler2019if,bird2020fairlearn,zemel2013learning,jagielski2019differentially,konstantinov2022fairness}.
\else
\cite{dwork2012fairness,hardt2016equality,sharifi2019average,bellamy2019ai,wexler2019if,bird2020fairlearn,zemel2013learning}.
\fi
In formal methods, too, there are some works which statically verify absence of biases of learned models
\ifarxiv \cite{albarghouthi2017fairsquare,bastani2019probabilistic,sun2021probabilistic,ghosh2020justicia,meyer2021certifying,john2020verifying,balunovic2021fair,ghosh2021algorithmic}.
\else
\cite{albarghouthi2017fairsquare,sun2021probabilistic,ghosh2020justicia,meyer2021certifying,john2020verifying,balunovic2021fair,ghosh2021algorithmic}.
\fi
All of these works are static interventions and rely on the availability of the system model, which may not be always true.

Runtime verification of algorithmic fairness, through continuous monitoring of decision events, is a relatively new area pioneered by the work of Albarghouthi et al.~\cite{albarghouthi2019fairness}.
We further advanced their idea in our other works which appeared recently \cite{henzinger2023monitoring,henzinger2023runtime}.
In those works, on one hand, we generalized the class of supported system models to Markov chains and presented the new Bayesian statistical view of the problem \cite{henzinger2023monitoring}.
On the other hand, we relaxed the time-invariance assumption on the system \cite{henzinger2023runtime}.
In this current paper, we limit ourselves to time-invariant systems but extend the system models to partially observed Markov chains and consider the broader class of \vbse properties, which enables us to additionally express properties whose values depend on observation sequences.

Traditional runtime verification techniques support mainly temporal properties and employ finite automata-based monitors 
\ifarxiv
\cite{stoller2011runtime,junges2021runtime,faymonville2017real,maler2004monitoring,donze2010robust,bartocci2018specification,baier2003ctmc}.
\else
\cite{maler2004monitoring,donze2010robust,bartocci2018specification,baier2003ctmc}.
\fi
In contrast, runtime verification of algorithmic fairness requires checking statistical properties, which is beyond the limit of what automata-based monitors can accomplish.
Although there are some works on quantitative runtime verification using richer types of monitors (with counters/registers like us) \cite{finkbeiner2002collecting,henzinger2020monitorability,otop2019quantitative,henzinger2021quantitative}, the considered specifications usually do not extend to statistical properties such as algorithmic fairness.

Among the few works on monitoring statistical properties of systems, a majority of them only provides asymptotic correctness guarantees \cite{ferrere2019monitoring,waudby2021time}, whereas we provide anytime guarantees.
On the other hand, works on monitoring statistical properties with finite-sample (nonasymptotic) guarantees are rare and are restricted to simple properties, such as probabilities of occurrences of certain events \cite{bartolo2021towards} and properties specified using certain fragments of LTL \cite{ruchkin2020compositional}.
Monitoring POMCs (the same modeling formalism as us) were studied before by Stoller et al.~\cite{stoller2012runtime}, though the setting was a bit different from ours.
Firstly, they only consider LTL properties, and, secondly, they assume the system model to be known by the monitor.
This way the task of the monitor effectively reduces to a state estimation problem from a given observation sequence.

Technique-wise, there are some similarities between our work and the works on statistical model-checking \cite{ashok2019pac,younes2002probabilistic,clarke2011statistical,david2013optimizing,agha2018survey} in that both compute PAC-guarantees on satisfaction or violation of a given specification.
However, to the best of our knowledge, the existing statistical model-checking approaches do not consider algorithmic fairness properties.


\section{Preliminaries}

\subsection{Notation}

We write $\mathbb{R}$, $\mathbb{R}^+$, $\mathbb{N}$, and $\mathbb{N}^+$ to denote the sets of real numbers, positive real numbers, natural numbers (including zero), and positive integers, respectively.

Let $\Sigma$ be a  countable alphabet.
We write $\Sigma^*$ and $\Sigma^{\omega}$ to denote, respectively, the set of every finite and infinite word over $\Sigma$.
Moreover, $\Sigma^{\infty}$ denotes the set of finite and infinite words, i.e., $\Sigma^\infty\coloneqq\Sigma^* \cup \Sigma^{\omega}$. 
We use the convention that symbols with arrow on top will denote words, whereas symbols without arrow will denote alphabet elements.
Let $\seq{s} = s_1s_2\ldots$ be a word.
We write $\seq{s}_i$ to denote the $i$-th symbol $s_i$, and write $\seq{s}_{i\twodots j}$ to denote the subword $s_i\ldots s_j$, for $i<j$.
We use the convention that the indices of a word begin at $1$, so that the length of a word matches the index of the last symbol. 

Let $\seq{s}\in \Sigma^*$ and any $\seq{t}\in \Sigma^\infty$ be two words.
We denote the concatenation of $\seq{s}$ and $\seq{t}$ as $\seq{s}\seq{t}$.
We generalize this to sets of words: 
For $S\subseteq \Sigma^*$ and $T\subseteq \Sigma^\infty$, we define the concatenation $ST\coloneqq \set{\seq{s}\seq{t} \mid  \seq{s}\in S, \seq{t}\in T}$.
We say $\seq{s}$ is a prefix of $\seq{r}$, written $\seq{s}\pref \seq{r}$, if there exists a word $\seq{t}\in\Sigma^\infty$ such that $\seq{s}\seq{t} = \seq{r}$.

Suppose $\mathbb{T}\subseteq \mathbb{R}$ is a subset of real numbers, $v\in \mathbb{T}^n$ is a vector of length $n$ over $\mathbb{T}$, and $M\in \mathbb{T}^{n\times m}$ is a matrix of dimension $n\times m$ over $\mathbb{T}$; here $m,n$ can be infinity.
We use $v_i$ to denote the $i$-th element of $v$, and $M_{ij}$ to denote the element at the intersection of the $i$-th row and the $j$-th column of $M$.
A probability distribution over a set $S$ is a vector $v\in [0,1]^{|S|}$, such that $\sum_{i\in [1;|S|]} v_i=1$.

\subsection{Randomized Event Generators: Partially Observed Markov Chains}
We use partially observed Markov chains (POMC) as sequential randomized generators of events.
A POMC is a tuple $\tup{\Q,\trm,\initd,\Out,\OutFn}$, where
$\Q = \mathbb{N}^+$ is a countable set of states,
$\trm$ is a stochastic matrix of dimension $|\Q|\times |\Q|$, called the \emph{transition probability matrix},
$\initd$ is a probability distribution over $\Q$ representing the \emph{initial state distribution},
$\Out$ is a countable set of \emph{observations}, and
$\OutFn\colon \Q\to \Out$ is a function mapping every state to an \emph{observation}.
All POMCs in this paper are time-homogeneous, i.e., their transition probabilities do not vary over time. 

Semantically, every POMC $\mc$ induces a probability measure $\pr_\mc(\cdot)$ over the generated state and observation sequences.
For every finite state sequence $\seq{q} = q_1q_2\ldots q_t \in \Q^{*}$, the probability that $\seq{q}$ is generated by $\mc$ is given by $\pr_\mc(\seq{q}) = \initd_{q_1}\cdot\prod_{i=1}^{t-1} \trm_{q_iq_{i+1}}$.
Every finite state sequence $\seq{q}\in \Q^*$ for which $\pr_\mc(\seq{q})>0$ is called a finite \emph{internal path} of $\mc$; we omit $\mc$ if it is clear from the context.
The set of every internal path of length $n$ is denoted as $\paths[n]{\mc}$, and the set of every finite internal path is denoted as $\paths{\mc}$. 

Every finite internal path $\seq{q}$ can be extended to a set of infinite internal paths, which is called the cylinder set induced by $\seq{q}$, and is defined as $\cyl{\seq{q}}\coloneqq \set{\seq{r}\in \Q^\omega\mid \seq{q}\pref\seq{r}}$.
The probability measure $\pr_\mc(\cdot)$ on finite internal paths induces a pre-measure on the respective cylinder sets, which can be extended to a unique measure on the infinite internal paths by means of the Carath\'eodory's extension theorem \cite[pp.~757]{baier2008principles}.
The probability measure on the set of infinite internal paths is also denoted using $\pr_\mc(\cdot)$.

An external observer can only observe the observable part of an internal path of a POMC.
Given an internal path $\seq{q} = q_1q_2\ldots \in \Q^\infty$, we write $\OutFn(\seq{q})$ to denote the observation sequence $\OutFn(q_1)\OutFn(q_2)\ldots\in \Out^\infty$.
For a set of internal paths $S\subseteq \Q^\infty$, we write $\OutFn(S)$ to denote the respective set of observation sequences $\set{\seq{w}\in \Out^\infty\mid \exists \seq{q}\;.\;\seq{w}=\OutFn(\seq{q})}$.
An observation sequence $\seq{w}\in \Out^\infty$ is called an \emph{observed} path (of $\mc$) if there exists an internal path $\seq{q}$ for which $\OutFn(\seq{q})=\seq{w}$.
As before, we write $\outpaths[n]{\mc}$ for the set of every observed path of length $n$, and $\outpaths{\mc}$ for the set of every finite observed path.

We also use the inverse operator of $\OutFn$ to map every observed path $\seq{w}$ to the set of possible internal paths: $\OutFn^{-1}(\seq{w})\coloneqq\set{\seq{q}\in \Q^\infty\mid \OutFn(\seq{q}) = \seq{w}}$.
Furthermore, we extend $\OutFn^{-1}(\cdot)$ to operate over sets of observation sequences in the following way:
For any given $S\subseteq \Out^\infty$, define $\OutFn^{-1}(S)\coloneqq \set{\seq{q}\mid \exists \seq{w}\in S\;.\; \OutFn(\seq{q})=\seq{w}}$.

We abuse the notation and use $\pr_\mc(\cdot)$  to denote the induced probability measure on the set of observed paths, defined in the following way.
Given every set of finite observed paths $S\subseteq \Out^*$, we define $\pr_\mc(S)\coloneqq \sum_{\seq{q}\in\OutFn^{-1}(S)} \pr_\mc(\seq{q})$.
When the paths in a given set are infinite, the sum is replaced by integral.
We write $\seq{W}\sim \mc$ to denote the random variable that represents the distribution over finite sample observed paths generated by the POMC $\mc$.

\begin{example}
\label{ex:lending POMC}
As a running example, we introduce a POMC that models the sequential interaction between a bank and loan applicants.
Suppose there is a population of loan applicants, where each applicant has a credit score between $1$ and $4$, and belongs to either an advantaged group $A$ or a disadvantaged group $B$.
At every step, the bank receives loan application from one applicant, and, based on some unknown (but non-time-varying) criteria, decides whether to grant loan or reject the application.
We want to monitor, for example, the difference between loan acceptance probabilities for people belonging to the two groups.

The underlying POMC $\mc$ that models the sequence of loan application events is shown in Fig.~\ref{fig:illustrative POMC model of loan example}.
A possible internal path is $S(A,1)NS(A,4)YSB(A,3)N\ldots$, whose corresponding observed path is $SANSAYSBN\ldots$.
In our experiments, we use a more realistic model of the POMC with way more richer set of features for the individuals.
\end{example}

\begin{figure}
	\vspace{-2.5cm}
	\begin{tikzpicture}[scale=0.4,node distance=0.5cm,trim left=-6cm]
		\tikzstyle{arrow} = [thick,->,>=stealth]
		\tikzstyle{every node} = [font=\tiny,inner sep=0,outer sep=0]
		
		\node[state]	(s)		at	(0,0)	{$S$};
		\node[state]	(a1)		[below left=0.75 cm of s]	{$(A,4)$};
		\node[state]	(a2)		[left=of a1]		{$(A,3)$};
		\node[state]	(a3)		[left=of a2]		{$(A,2)$};
		\node[state]	(a4)		[left=of a3]		{$(A,1)$};
		\node[state]	(b1)		[below right=0.75 cm of s]	{$(B,1)$};
		\node[state]	(b2)		[right=of b1]		{$(B,2)$};
		\node[state]	(b3)		[right=of b2]		{$(B,3)$};
		\node[state]	(b4)		[right=of b3]		{$(B,4)$};
		\node[state]	(y)		[below =of a4]		{$Y$};
		\node[state]	(n)		[below =of b4]		{$N$};
		
		\begin{scope}[on background layer]
			\draw[fill=gray!15!white]	($(a4.south west)-(0.3,0.3)$)	rectangle	($(a1.north east)+(0.3,0.3)$);
			\draw[fill=gray!15!white]	($(b1.south west)-(0.3,0.3)$)	rectangle	($(b4.north east)+(0.3,0.3)$);
			\draw[fill=gray!15!white]	($(s.south west)-(0.3,0.3)$)		rectangle	($(s.north east)+(0.3,0.3)$);
			\draw[fill=gray!15!white]	($(y.south west)-(0.3,0.3)$)		rectangle	($(y.north east)+(0.3,0.3)$);
			\draw[fill=gray!15!white]	($(n.south west)-(0.3,0.3)$)		rectangle	($(n.north east)+(0.3,0.3)$);
		\end{scope}
		
		\node	at	($(a4.north)+(0,0.4)$)		{$A$};
		\node	at	($(b4.north)+(0,0.4)$)		{$B$};
		\node	at	($(s.north)+(0,0.4)$)		{$S$};
		\node	at	($(y.south)+(0,-0.4)$)		{$Y$};
		\node	at	($(n.south)+(0,-0.4)$)		{$N$};
		
		\path[->]
			(s)		edge		(a1)
					edge[bend right]		(a2)
					edge	[bend right]	(a3)
					edge	[bend right]	(a4)
					edge		(b1)
					edge[bend left]		(b2)
					edge[bend left]		(b3)
					edge[bend left]		(b4)
			(a1)		edge	[bend left]	(y)
					edge	[bend right]	(n)
			(a2)		edge	[bend left]	(y)
					edge	[bend right]	(n)
			(a3)		edge		(y)
					edge[bend right]		(n)
			(a4)		edge		(y)
					edge	[bend right]	(n)
			(b1)		edge	[bend left]	(y)
					edge	[bend right]	(n)
			(b2)		edge	[bend left]	(y)
					edge	[bend right]	(n)
			(b3)		edge	[bend left]	(y)
					edge[bend right]		(n)
			(b4)		edge	[bend left]	(y)
					edge		(n)
			(y)		edge	[out=155,in=125,looseness=1.3]	(s)
			(n)		edge	[out=25,in=55,looseness=1.3]	(s);
	\end{tikzpicture}
	\vspace{-1cm}
	\caption{The POMC modeling the sequential interaction between the bank and the loan applicants.
	The states $S$, $Y$, and $N$ respectively denote the start state, the event that the loan was granted (``$Y$'' stands for ``Yes''), and the event that the loan was rejected (``$N$'' stands for ``No'').
	Every middle state $(X,i)$, for $X\in \set{A,B}$ and $i\in \set{1,2,3,4}$, represents the group ($A$ or $B$) and the credit score $i$ of the current applicant.
	The states $S,Y,N$ are fully observable, i.e., their observation symbols coincide with their state symbols.
	The middle states are partially observable, with every $(A,i)$ being assigned the observation $A$ and every $(B,i)$ being assigned the observation $B$.
	The states with the same observation belong to the same shaded box.
	}
	\label{fig:illustrative POMC model of loan example}
\end{figure}
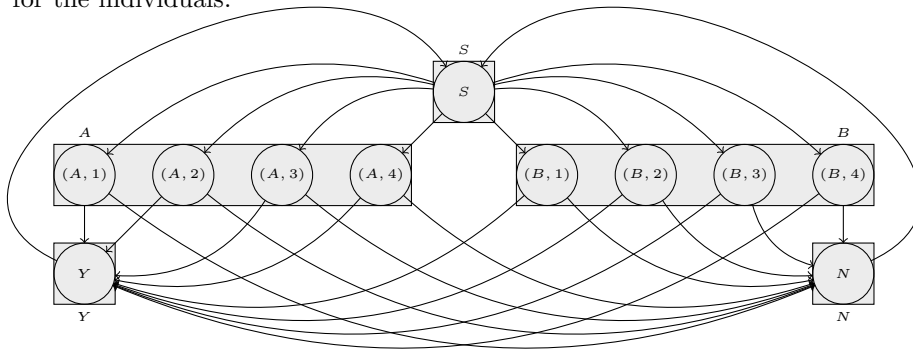
\vspace{-0.3cm}

\subsection{Register Monitors}
Our register monitors are adapted from the polynomial monitors of Ferr\`ere et al.\ \cite{ferrere2018theory}, and were also used in our previous work (in a more general randomized form) \cite{henzinger2023monitoring}.
Let $R$ be a finite set of integer variables called registers.
A function $v\colon R\to \mathbb{N}$ assigning concrete value to every register in $R$ is called a valuation of $R$.
Let $\mathbb{N}^R$ denote the set of all valuations of $R$.
Registers can be read and written according to relations in the signature $S=\langle 0,1,+,-,\times,\div,\leq \rangle$.
We consider two basic operations on registers:
\begin{itemize}[noitemsep,topsep=0pt]
	\item A \emph{test} is a conjunction of atomic formulas over $S$ and their negation;
	\item An \emph{update} is a mapping from variables to terms over $S$.
\end{itemize}
We use $\Phi(R)$ and $\Gamma(R)$ to respectively denote the set of tests and updates over $R$.
\emph{Counters} are special registers with a restricted signature $S=\langle 0,1,+,-,\leq \rangle$.

\begin{definition}[Register monitor]
	A register monitor is a tuple $\tup{\Sigma,\Lambda,R,v_{\mathsf{in}},f,\transrel}$ where 
	$\Sigma$ is a finite input alphabet, 
	$\verdict$ is an output alphabet, 
	$R$ is a finite set of registers,
	$v_{\mathsf{in}}\in \mathbb{N}^R$ is the initial valuation of the registers,
	$f\colon \mathbb{N}^R\to \verdict$ is an output function, and
	$\transrel\colon \Sigma\times\Phi(R)\to \Gamma(R)$ is the transition function such that
	for every $\sigma\in \Sigma$ and for every valuation $v\in \mathbb{N}^R$, there exists a unique $\phi\in \Phi(R)$ with $v\models\phi$ and $\transrel(\sigma,\phi)\in \Gamma(R)$.
\end{definition}

We refer to register monitors simply as monitors, and we fix the output alphabet $\Gamma$ as the set of every real interval.

A \emph{state} of a monitor $\monitor$ is a valuation of its registers $v\in \mathbb{N}^R$; the initial valuation $v_\init$ is the initial state.
The monitor $\monitor$ \emph{transitions} from state $v$ to another state $v'$ on input $\sigma\in \Sigma$ if there exists $\phi$ such that $v\models \phi$, there exists an update $\gamma=\transrel(\sigma,\phi)$, and if $v'$ maps every register $x$ to $v'(x)=v(\gamma(x))$.
The transition from $v$ to $v'$ on input $\sigma$ is written as $v\xrightarrow{\sigma} v'$.
A \emph{run} of $\monitor$ on a word $w_1\ldots w_t\in \Sigma^*$ is a sequence of transitions $v_1=v_\init\xrightarrow{w_1} v_2\xrightarrow{w_2}\ldots\xrightarrow{w_t} v_{t+1}$. 
The \emph{semantics} of the monitor is the function $\sem{\monitor}\colon \Sigma^*\to \Lambda$ that maps every finite input word to the last output of the monitor on the respective run.
For instance, the semantics of $\monitor$ on the word $\seq{w}$ is $\sem{\monitor}(\seq{w}) = f(v_{t+1})$.
An illustrative example of register monitors can be found in our earlier work \cite[Sec.~2.2]{henzinger2023monitoring}.

\section{Monitoring Quantitative Algorithmic Fairness Properties}

In our prior work on monitoring algorithmic fairness for \emph{fully} observable Markov chains \cite{henzinger2023monitoring}, we formalized (quantitative) algorithmic fairness properties using the so-called Probabilistic Specification Expressions (PSE).
A PSE $\varphi$ is an arithmetic expression over the variables of the form $v_{ij}$, for $i,j\in Q$ for a finite set $Q$.
The semantics of $\varphi$ is interpreted statically over a given Markov chain $M$ with state space $Q$, by replacing every $v_{ij}$ with the transition probability from the state $i$ to the state $j$ in $M$.
The algorithmic question we considered is that given a PSE $\varphi$, how to construct a monitor that will observe one long path of an unknown Markov chain, and after each observation will output a PAC estimate of the value of $\varphi$ with a pre-specified confidence level.

An exact representation of the above problem formulation is not obvious for POMCs.
In particular, while it is reasonable to generalize the semantics of PSEs to be over the probabilities between \emph{observations} instead of probabilities between states, it is unclear how these probabilities will be defined.
In the following, we use simple examples to illustrate several cruxes of formalizing algorithmic fairness on POMCs, and motivate the use of the assumptions of stationary distribution, irreducibility, and positive recurrence (formally stated in Assump.~\ref{ass:stationarity}) to mitigate the difficulties.
These assumptions will later be used to formalize the algorithmic fairness properties in Sec.~\ref{sec:BSE}.

In the following, we will write $\st$ to denote the \emph{stationary distribution} of Markov chains with transition matrix $\trm$, i.e., $\pi = \trm \pi$.

\subsection{Role of the Stationary Distribution}

First, we demonstrate in the following example that POMCs made up of unfair sub-components may have overall fair behavior in the stationary distribution, which does not happen for fully observable Markov chains.

\begin{example}\label{ex:two coins:uniform switch}
Suppose there are two coins $A$ and $B$, where $A$ comes up with heads with probability $0.9$ and $B$ comes up with tails with probability $0.9$. 
We observe a sequence of coin tosses (i.e., the observations are heads and tails), without knowing which of the two coins (the state) was tossed.
If the choice of the coin at each step is made uniformly at random, then, intuitively, the system will produce fair outcomes in the long run, with equal proportions of heads and tails being observed in expectation.
Thus, although each coin was unfair, we can still observe overall fair outcome, provided the fraction of times each coin was chosen in the stationary distribution balances out the unfairness in the coins themselves.

To make the above situation more concrete, imagine that the underlying POMC has two states $a,b$ (e.g., $a,b$ represent the states when $A,B$ are selected for tossing, respectively) with the same observation (which coin is selected is unknown to the observer), where the measures of the given fairness condition (e.g., the biases of the coins $A,B$) are given by $f_a, f_b$.
We argue that, intuitively, the overall fairness of the POMC is given by $\st_af_a+\st_bf_b$.
This type of analysis is unique to POMCs, whereas for fully observable Markov chains, computation of fairness is simpler and can be done without involving the stationary distribution.
\end{example}

In the next example, we demonstrate some challenges of monitoring fairness when we express fairness by weighing in the stationary distribution as above.

\begin{example}\label{ex:two coins:one way switch}
Consider the setting of Ex.~\ref{ex:two coins:uniform switch}, and suppose now only the initial selection of the coin happens uniformly at random but subsequently the same coin is used forever.
If we consider the underlying POMC, both $\pi_a,\pi_b$ will be $0.5$, because the initial selection of the coin happens uniformly at random.
However, the monitor will observe the toss outcomes of only one of the two coins on a given trace.
It is unclear how the monitor can extrapolate its estimate to the overall fairness property $\st_af_a+\st_bf_b$ in this case.
\end{example}

To deal with the situations described in Ex.~\ref{ex:two coins:uniform switch} and Ex.~\ref{ex:two coins:one way switch}, we will make the following assumption.

\begin{assumption}\label{ass:stationarity}
	We assume that the POMCs are irreducible, positively recurrent, and are initialized in their stationary distributions.
\end{assumption}

The irreducibility and positive recurrence guarantees existence of the stationary distribution.
Assump.~\ref{ass:stationarity} ensures that, firstly, we will see every state infinitely many times (ruling out the above corner-case), and, secondly, the proportion of times the POMC will spend in all the states will be the same (given by the stationary distribution) all the time.
While Assump.~\ref{ass:stationarity} makes it easier to formulate and analyze the algorithmic fairness properties over POMCs, monitoring these properties over POMCs still remains a challenging problem due to the non-Markovian nature of the observed path.

\subsection{Bounded Specification Expressions}\label{sec:BSE}
We introduce bounded specification expressions (\vbse) to formalize the fairness properties that we want to monitor.
A \vbse assigns values to finite word patterns of a given alphabet.
The main components of a \vbse are \emph{atomic functions}, where an atomic function $f_n$ assigns bounded real values to observation sequences of length $n$, for a given $n\in \mathbb{N}^+$.
An atomic function $f_n$ can express quantitative star-free regular expressions, assigning real values to words of length $n$.

Following are some examples.
Let $\Out = \set{r,g}$ be an observation alphabet, where $r$ stands for ``request'' and $g$ stands for ``grant.''
A boolean atomic function $f_{2}$, with $f_2(rr)=0$ and $f_2(\seq{w})=1$ for every $\seq{w}\in \Out^{2}\setminus \set{rr}$, can express the property that two requests should not appear consecutively.
An integer-valued atomic function $f_{10}$, with $f_{10}(rr^{i}g\seq{w})=i$ when $i\in [0;8]$ and $\seq{w}\in\Out^{8-i}$, and with $f_{10}(\seq{z})=8$ when $\seq{z}\in \Out^{10}\setminus rr^{i}g\Out^{8-i}$, assigns to any sub-sequence the total waiting time between a request and the subsequent grant, while saturating the waiting time to $8$ when it is above $8$.
The specified word-length $n$ for any atomic function $f_n$ is called the \emph{arity} of $f_n$.
Let $P$ be the set of all atomic functions over a given observation alphabet.

A \vbse may also contain arithmetic and/or logical connectives and relational operators to express complex value-based properties of an underlying probabilistic generator, like the POMCs.
We consider two fragments of \vbse-s, expressing qualitative and quantitative properties, and called, respectively, \qualvbse and \quanvbse in short.
The syntaxes of the two types of \vbse-s are given as:
\begin{subequations}\label{equ:syntax property}
	\begin{alignat}{2}
		\text{(\quanvbse)}\qquad\varphi &\Coloneqq \kappa \in \mathbb{R} \ | \ f\in P \ | \  \varphi + \varphi  \ | \ \varphi\cdot\varphi  \ | \ 1\div\varphi  \ | \ (\varphi), \label{equ:syntax probability prop.}\\
		\text{(\qualvbse)}\qquad\psi &\Coloneqq \true \ | \ \varphi \geq 0 \ | \ \lnot \psi \ | \ \psi \land \psi.
	\end{alignat}
	\end{subequations}

The semantics of a \quanvbse $\varphi$ over the alphabet $\Sigma$ is interpreted over POMCs satisfying Assump.~\ref{ass:stationarity} and with observations $\Sigma$. 
When $\varphi$ is an atomic function $f\colon \Sigma^{n}\to [a,b]$ for some $n \in \mathbb{N}^+$, $a,b\in \mathbb{R}$, then, for a given POMC $\mc$, the semantics of $\varphi$ is defined as follows.
For every time $t\in \mathbb{N}^+$,
\begin{align}
	\varphi(\mc) = f(\mc) \coloneqq \int_{\Out^\omega} f(\seq{\out}_{t:t+n-1})d\pr_\mc(\seq{\out}).
\end{align}
The definition of $f(\mc)$ is well-defined, because $f(\mc)$ will be the same for every $t$, since the POMC will remain in the stationary distribution forever (by Assump.~\ref{ass:stationarity} and by the property of stationary distributions).
Intuitively, the semantics $f(\mc)$ represents the expected value of the function $f$ on any sub-word of length $n$ on any observed path of the POMC, when it is known that the POMC is in the stationary distribution (Assump.~\ref{ass:stationarity}).

The arithmetic operators in \quanvbse-s have the usual semantics (``$+$'' for addition, ``$-$'' for difference, ``$\cdot$'' for multiplication, and ``$\div$'' for division).

On the other hand, the semantics of a \qualvbse $\psi$ is boolean, which inductively uses the semantics of the constituent $\varphi$ expressions.
For a \qualvbse $\psi=\varphi\geq 0$, the semantics of $\psi$ is given by:
\begin{align*}
	\psi(\mc) \coloneqq \begin{cases}
					\true	&	\text{if }\varphi(\mc)\geq 0,\\
					 \false	&	\text{otherwise}.
				\end{cases}
\end{align*}
The semantics of the boolean operators in $\psi$ is the usual semantics of boolean operators in propositional logic.
The following can be added as syntactic sugar:
``$\varphi \geq c$'' for a constant $c$ denotes ``$\varphi'\geq 0$'' with $\varphi'\coloneqq \varphi -c$,
``$\varphi \leq c$'' denotes ``$-\varphi \geq -c$,'' 
``$\varphi = c$'' denotes ``$(\varphi\geq c)\land (\varphi\leq c)$,''
``$\varphi > c$'' denotes ``$\lnot(\varphi \leq c)$,''
``$\varphi < c$'' denotes ``$\lnot(\varphi \geq c)$,'' and
``$\psi\lor\psi$'' denotes ``$\lnot(\lnot\psi \land \lnot\psi)$.''

\medskip
\noindent\textbf{Fragment of \vbse: Probabilistic Specification Expressions (PSEs):}
In our prior work \cite{henzinger2023monitoring}, we introduced PSEs to model algorithmic fairness properties of Markov chains with fully observable state space.
PSEs are arithmetic expressions over atomic variables of the form $v_{ij}$, where $i,j$ are the states of the given Markov chain, and whose semantic value equals the transition probability from $i$ to $j$.
The semantics of a PSE is then the valuation of the expression obtained by plugging in the respective transition probabilities.
We can express PSEs using \quanvbse-s as below.
For every variable $v_{ij}$ appearing in a given PSE, we use the atomic function $f$ that assigns to every finite word $\seq{w}\in\Sigma^*$ the ratio of the number of $(i,j)$ transitions to the number of occurrences of $i$ in $\seq{w}$.
We will denote the function $f$ as $P(j\mid i)$ in this case, and, in general, $i,j$ can be observation labels for the case of \quanvbse-s.
It is straightforward to show that semantically the two expressions will be the same.
On the other hand, \quanvbse-s are strictly more expressive than PSEs.
For instance, unlike PSEs, \quanvbse-s can specify probability of transitioning from one observation label to another, the average number of times a given state is visited on any finite path of a Markov chain, etc.

\medskip
\noindent\textbf{Fragment of \vbse: Probabilities of Sequences:}
We consider a useful fragment that expresses the probability that a sequence from a given set $S\subseteq \Sigma^*$ of finite observation sequences will be observed at any point in time on any observed path.
We assume that the length of every sequence in $S$ is uniformly bounded by some integer $n$.
Let $\overline{S}\subseteq \Sigma^n$ denote the set of extensions of sequences in $S$ up to length $n$, i.e., $\overline{S}\coloneqq\set{\seq{w}\in\Sigma^n\mid \exists \seq{u}\in S\;.\;\seq{u}\pref\seq{w}}$.
Then the desired property will be expressed simply using an atomic function with $f\colon \Sigma^n\to \set{0,1}$ being the indicator function of the set $\overline{S}$, i.e., $f(\seq{w})=1$ iff $\seq{w}\in \overline{S}$.
It is straightforward to show that, for a given POMC $\mc$, the semantics $f(\mc)$ expresses the desired property. 
For a set of finite words $S\subseteq \Sigma^*$, we introduce the shorthand notation $P(S)$ to denote the probability of seeing an observation from the set $S$ at any given point in time.
Furthermore, for a pair of sets of finite words $S,T\subseteq \Sigma^*$, we use the shorthand notation $P(S\mid T)$ to denote $\sfrac{P(TS)}{P(T)}$, which represents the conditional probability of seeing a word in $S$ after we have seen a word in $T$.

\begin{example}[Group fairness.]
\label{ex:fairness properties in lending POMC}
	Consider the setting in Ex.~\ref{ex:lending POMC}.
	We show how we can represent various group fairness properties using \quanvbse-s.
	Demographic parity \cite{dwork2012fairness} quantifies bias as the difference between the probabilities of individuals from the two demographic groups getting the loan, which can be expressed as $P(Y\mid A)-P(Y\mid B)$.
	Disparate impact \cite{feldman2015certifying} quantifies bias as the ratio between the probabilities of getting the loan across the two demographic groups, which can be expressed as $P(Y\mid A)\div P(Y\mid B)$.
\end{example}

In prior works \cite{albarghouthi2019fairness,henzinger2023monitoring}, group fairness properties could be expressed on strictly less richer class of fully observed Markov chain models, where the features of each individual were required to contain only their group information.
An extension to the model of Ex.~\ref{ex:lending POMC} is not straightforward as the confidence interval used in these works would not be applicable.

\begin{example}[Social fairness.]\label{ex:social fairness in lending}
	Consider the setting in Ex.~\ref{ex:lending POMC}, except that now the credit score of each individual will be observable along with their group memberships, i.e., each observation is a pair of the form $(X,i)$ with $X\in \set{A,B}$ and $i\in\set{1,2,3,4}$.
	There may be other non-sensitive features, such as age, which may be hidden.
	We use the social fairness property  \cite{henzinger2023runtime} quantified as the difference between the expected credit scores of the groups $A$ and $B$.
	To express this property, we use the unary atomic functions $f_1^X\colon \Sigma\to \mathbb{N}$, for $X\in \set{A,B}$, such that $f_1^X\colon (Y,i)\mapsto i$ if $Y=X$ and is $0$ otherwise.
	The semantics of $f_1^X$ is the expected credit score of group $X$ scaled by the probability of seeing an individual from group $X$.
	Then social fairness is given by the \quanvbse $\varphi = \frac{f_1^A}{P(A)} - \frac{f_1^B}{P(B)}$.
\end{example}

\begin{example}[Quantitative group fairness.]\label{ex:fair salary distribution in hiring}
	Consider a sequential hiring scenario where at each step the salary and a sensitive feature (like gender) of a new recruit are observed.
	We denote the pair of observations as $(X,i)$, where $X\in \set{A,B}$ represents the group information based on the sensitive feature and $i$ represents the salary.
	We can express the disparity in expected salary of the two groups in a similar manner as in Ex.~\ref{ex:social fairness in lending}.
	Define the unary functions $f_1^X\colon \Sigma\to \mathbb{N}$, for $X\in \set{A,B}$, such that $f_1^X\colon (Y,i)\mapsto i$ if $Y=X$ and is $0$ otherwise.
	The semantics of $f_1^X$ is the expected salary of group $X$ scaled by the probability of seeing an individual from group $X$.
	Then the group fairness property is given by the \quanvbse $\varphi = \frac{f_1^A}{P(A)} - \frac{f_1^B}{P(B)}$.
\end{example}

\subsection{Problem Statement}
Informally, our goal is to build monitors that will observe randomly generated observed paths of increasing length from  a given unknown POMC, and, after each observation, will generate an updated estimate of how fair or biased the system was until the current time.
Since the monitor's estimate is based on statistics collected from a finite path, the output may be incorrect with some probability.
That is, the source of randomness is from the fact that the prefix is a finite sample of the fixed but unknown POMC.

For a given $\delta\in (0,1)$, and a given \vbse $\varphi$, we define a \emph{problem instance} as the tuple $\tup{ \varphi,\delta}$.

\begin{prob}[Monitoring \quanvbse-s]\label{prob:frequentist:quantitative}
	Suppose $\tup{\varphi,\delta}$ is a problem instance where $\varphi$ is a \quanvbse.
	Design a monitor $\monitor$, with output alphabet $\set{[l,u]\mid l,u\in\mathbb{R}\;.\; l < u}$, such that for every POMC $\mc$ satisfying Assump.~\ref{ass:stationarity}, we have:
	\begin{align}\label{equ:problem:frequentist:quantitative}
		\pr_{\seq{\rout}\sim\mc}\left(\varphi(\mc)\in \sem{\monitor}(\seq{\rout})\right) \geq 1-\delta.
	\end{align}
\end{prob}

The estimate $[l,u]=\sem{\monitor}(\seq{\out})$ is called the $(1-\delta)\cdot 100\%$ \emph{confidence interval} for $\varphi(M)$.
The radius, given by $\varepsilon=0.5\cdot (u-l)$, is called the \emph{estimation error}, the quantity $\delta$ is called the \emph{failure probability}, and the quantity $1-\delta$ is called the \emph{confidence}.
Intuitively, the monitor outputs the estimated confidence interval that contains the range of values within which the true semantic value of $\varphi$ falls with $(1-\delta)\cdot 100\%$ probability.
The estimate gets more precise as the error gets smaller, and the confidence gets higher.
We will prefer the monitor with the maximum possible precision, i.e., having the least estimation error for a given $\delta$.

\begin{prob}[Monitoring \qualvbse-s]\label{prob:frequentist:qualitative}
	Suppose $\tup{\varphi,\delta}$ is a problem instance where $\varphi$ is a \qualvbse.
	Design a monitor $\monitor$, with output alphabet $\set{\true,\false}$, such that for every POMC $\mc$ satisfying Assump.~\ref{ass:stationarity}, we have:
	\begin{align}\label{equ:problem:frequentist:qualitative}
		&\pr_{\seq{\rout}\sim\mc}\left( \psi(\mc) \mid \sem{\monitor}(\seq{\rout}) = \true\right) \geq 1- \delta,\\
		&\pr_{\seq{\rout}\sim\mc}\left( \lnot\psi(\mc) \mid \sem{\monitor}(\seq{\rout}) = \false\right) \geq 1- \delta.
	\end{align}
\end{prob}

Unlike Prob.~\ref{prob:frequentist:quantitative}, the monitors addressing Prob.~\ref{prob:frequentist:qualitative} do not output an interval but output a boolean verdict.
Intuitively, the output of the monitor for Prob.~\ref{prob:frequentist:qualitative} is either $\true$ or $\false$, and it is required that the semantic value of the property $\psi$ is, respectively, $\true$ or $\false$ with $(1-\delta)\cdot 100\%$ probability.

\section{Construction of the Monitor}

\vspace{-0.2cm}

Our overall approach in this work is similar to the prior works \cite{albarghouthi2019fairness,henzinger2023monitoring,henzinger2023runtime}:
We first compute a point estimate of the given \vbse from the finite observation sequence of the POMC, and then compute an interval estimate through known concentration inequalities.
However, the same concentration inequalities as the prior works cannot be applied, because they required two successive observed events be independent, which is not true for POMCs.
For instance, in Ex.~\ref{ex:two coins:one way switch}, if we start the sequence of tosses by first tossing coin $A$, then we know that the subsequent tosses are going to be done using $A$ only, thereby implying that the outcomes of the future tosses will be statistically dependent on the initial random process that chooses between the two coins at the first step.

We present a novel theory of monitors for \vbse-s on POMCs satisfying Assump.~\ref{ass:stationarity}, using McDiarmid-style concentration inequalities for hidden Markov chains.
In Sec.~\ref{sec:point estimator for atoms} and \ref{sec:interval estimator for atoms}, we first present, respectively, the point estimator and the monitor for an individual atom.
In Sec.~\ref{sec:the complete monitor}, we build the overall monitor by combining the interval estimates of the individual atoms through interval arithmetic and union bound.

\subsection{A Point Estimator for the Atoms}
\label{sec:point estimator for atoms}

 Consider a \vbse atom $f$.
 We present a point estimator for $f$, which computes an estimated value of $f(\mc)$ from a finite observed path $\seq{\out}\in \Sigma^t$, of an arbitrary length $t$, of the unknown POMC $\mc$.
 The point estimator $\hat{f}(\cdot)$ is given as:
\begin{align}
	\hat{f}(\seq{\out})\coloneqq \frac{1}{t-n+1} \sum_{i=1}^{t-n+1} f(\seq{\out}_{i\twodots i+n-1}). \label{equ:point estimator}
\end{align}

 In the following proposition, we establish the unbiasedness of the estimator $\hat{f}(\cdot)$, a desirable property that says that the expected value of the estimator's output will coincide with the true value of the property that is being estimated.

\begin{proposition}
	\label{lemma:expectation atom}
	Let $\mc$ be a POMC satisfying Assump.~\ref{ass:stationarity}, $f\colon\Out^{n}\to[a,b]$ be a function for fixed $n$, $a$, and $b$, and $\seq{\rout} \sim \mc$ be a random observed path of an arbitrary length $|\seq{\rout}|=t > n$.
	Then $\expe(\hat{f}(\seq{\rout}))=f(\mc)$.
\end{proposition}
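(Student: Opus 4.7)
The plan is to reduce the claim to a straightforward application of linearity of expectation combined with the time-invariance afforded by Assumption~\ref{ass:stationarity}. The definition of $f(\mc)$ given in the paper is $f(\mc)=\int_{\Out^\omega} f(\seq{\out}_{t\twodots t+n-1})\,d\pr_\mc(\seq{\out})$, and the paper already notes this integral is independent of $t$ because the POMC sits in its stationary distribution. I would lean on precisely this fact in the proof.

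First, I would apply linearity of expectation to the estimator:
\[
\expe\bigl(\hat f(\seq{\rout})\bigr) \;=\; \frac{1}{t-n+1}\sum_{i=1}^{t-n+1}\expe\bigl(f(\seq{\rout}_{i\twodots i+n-1})\bigr).
\]
Each summand is, by definition, an integral of $f$ against the distribution of the length-$n$ subword $\seq{\rout}_{i\twodots i+n-1}$ induced by $\pr_\mc$.

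Second, I would argue that this distribution coincides with the stationary length-$n$ observation distribution and hence does not depend on the shift index $i$. Concretely, because $\mc$ is irreducible and positively recurrent and initialized in its stationary distribution $\st$ (Assumption~\ref{ass:stationarity}), the internal state $q_i$ is distributed according to $\st$ for every $i\geq 1$, and consequently the joint distribution of $(q_i,q_{i+1},\ldots,q_{i+n-1})$ is the same as that of $(q_1,\ldots,q_n)$. Pushing this through the observation map $\OutFn$ (which is applied pointwise and does not depend on time) shows that the distribution of $\seq{\rout}_{i\twodots i+n-1}$ is also independent of $i$. Therefore each summand equals $\int_{\Out^\omega} f(\seq{\out}_{1\twodots n})\,d\pr_\mc(\seq{\out})=f(\mc)$.

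Third, the calculation closes itself: the sum contains $t-n+1$ identical terms each equal to $f(\mc)$, so the prefactor $1/(t-n+1)$ cancels and we obtain $\expe(\hat f(\seq{\rout}))=f(\mc)$. There is no real obstacle here; the only point that deserves care is the second step — making explicit that stationarity of the \emph{internal} Markov chain implies stationarity of the \emph{observation} process, so that the index $i$ genuinely drops out even though the observed process itself is non-Markovian. The $t>n$ hypothesis is needed only to guarantee that the sum in \eqref{equ:point estimator} is nonempty.
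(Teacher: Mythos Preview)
Your proof is correct and follows essentially the same approach as the paper's: linearity of expectation followed by the observation that, under Assumption~\ref{ass:stationarity}, every length-$n$ window of the observation process has the same distribution, so each summand equals $f(\mc)$. The only difference is presentational---the paper verifies the shift-invariance explicitly by summing out the prefix $\seq{q}^A$ and suffix $\seq{q}^C$ of the internal path and using $\st M = \st$, whereas you invoke it directly as a property of stationary processes.
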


The following corollary establishes the counterpart of Prop.~\ref{lemma:expectation atom} for the fragment of \vbse with probabilities of sequences.

\begin{corollary}
	\label{lemma:probability atom}
	Let $\mc$ be a POMC satisfying Assump.~\ref{ass:stationarity}, $\specset\subset \Out^*$ be a set of bounded length observation sequences with bound $n$, $f:\Out^n\to \set{0,1}$ be the indicator function of the set $\overline{\specset}$, and $\seq{\rout} \sim \mc$ be a random observed path of an arbitrary length $|\seq{\rout}|>n$.
	Then $\expe(\hat{f}(\seq{\rout}))=P(\specset)$.
\end{corollary}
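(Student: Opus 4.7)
The plan is to derive the corollary as a direct consequence of Prop.~\ref{lemma:expectation atom} together with the construction of $\overline{\specset}$ from $\specset$. Since $f$ has arity $n$, the hypotheses of Prop.~\ref{lemma:expectation atom} are satisfied with $[a,b]=[0,1]$, so I can immediately conclude that $\expe(\hat{f}(\seq{\rout})) = f(\mc)$. What remains is to identify $f(\mc)$ with the quantity $P(\specset)$ as defined in Sec.~\ref{sec:BSE}.

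For this identification, I would unfold the definition of $f(\mc)$ from the \quanvbse semantics: for any time $t$,
\begin{equation*}
f(\mc) \;=\; \int_{\Out^\omega} f(\seq{\out}_{t\twodots t+n-1})\,d\pr_\mc(\seq{\out}) \;=\; \pr_\mc\bigl(\{\seq{\out}\in\Out^\omega \mid \seq{\out}_{t\twodots t+n-1}\in\overline{\specset}\}\bigr),
\end{equation*}
because $f$ is the $\{0,1\}$-valued indicator of $\overline{\specset}$. The next step is to observe that, because every word in $\specset$ has length at most $n$, the length-$n$ window $\seq{\out}_{t\twodots t+n-1}$ lies in $\overline{\specset}$ if and only if some $\seq{u}\in\specset$ is a prefix of $\seq{\out}_{t\twodots}$; this is a direct consequence of the definition $\overline{\specset}=\{\seq{w}\in\Out^n \mid \exists\seq{u}\in\specset.\;\seq{u}\pref\seq{w}\}$. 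Hence $f(\mc)$ equals the probability that a sequence from $\specset$ is observed starting at position $t$, which is precisely the quantity $P(\specset)$ (and the paper's shorthand $P(\specset)$ is well-defined exactly because, under Assump.~\ref{ass:stationarity}, this probability is the same for every $t$).

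The main obstacle is not technical but notational: one has to carefully reconcile the two definitions of ``observing $\specset$'' --- the indicator-function form used inside $\hat{f}$ (which looks at a fixed window of length $n$) and the ``appearing at some point in time'' form used to define $P(\specset)$ (which talks about prefixes of arbitrary length). The bridge is the set $\overline{\specset}$, whose definition is precisely tailored to make these two viewpoints coincide, and stationarity ensures that the choice of window position $t$ is immaterial. Combining this identification with Prop.~\ref{lemma:expectation atom} yields $\expe(\hat{f}(\seq{\rout}))=P(\specset)$, completing the proof.
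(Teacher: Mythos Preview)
Your proposal is correct and follows essentially the same approach as the paper: invoke Prop.~\ref{lemma:expectation atom} to obtain $\expe(\hat{f}(\seq{\rout}))=f(\mc)$, and then identify $f(\mc)$ with $P(\specset)$ via the indicator structure of $f$. The paper's own proof is terser (it simply notes that the indicator removes the probability mass of length-$n$ paths whose words are not in $\overline{\specset}$), while you spell out the role of $\overline{\specset}$ and stationarity more explicitly; the substance is the same.
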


\subsection{The Atomic Monitor}
\label{sec:interval estimator for atoms}

\begin{algorithm}
 	\caption{$\mathit{Monitor}_{(f,\delta)}$: Monitor for $(f,\delta)$ where $f\colon \Sigma^n\to [a,b]$ is an atomic function of a \vbse}
 	\label{alg:atomic monitor}
 		\begin{minipage}{0.34\textwidth}
 			\begin{algorithmic}[1] 
 			\Function{$\mathit{Init}()$}{}
 				\State $t\gets 0$ \Comment{current time}
 				\State $y\gets 0$ \Comment{current point estimate}
 				\State $\seq{w}\gets \underbrace{\bot \ldots \bot}_{n \text{ times}}$ \Comment{a dummy word of length $n$, where $\bot$ is the dummy symbol}
 			\EndFunction
 			\end{algorithmic}
 		\end{minipage}
 		\begin{minipage}{0.65\textwidth}
 			\begin{algorithmic}[1]
 			\Function{$\mathit{Next}(\sigma)$}{}
 				\State $t \gets t+1$ \Comment{progress time}
 				\If{$t<n$} \Comment{too short observation sequence}
 					\State $\seq{w}_t \gets \sigma$ 
 					\State \Return $\bot$ \Comment{inconclusive}
 				\Else
 					\State $\seq{w}_{1\twodots n-1} \gets \seq{w}_{2\twodots n}$ \Comment{shift window}\label{step:shift window}
 					\State $\seq{w}_n \gets \sigma$ \Comment{add the new observation}
 					\State $x \gets f(\seq{w})$ \Comment{latest evaluation of $f$}
 					\State $y \gets \left(y*(t-n)+x\right)/(t-n+1)$ \Comment{running av.\ impl.\ of Eq.~\ref{equ:point estimator}}
 					\State $\varepsilon\gets \sqrt{-\ln(\delta/2)\cdot \frac{t\cdot \min(t-n+1,n) \cdot 9 \cdot \tau_{mix}}{2 (t-n+1)^2}}$ 
					\ifarxiv 					
 					\Comment{ PAC bound, see Thm.~\ref{thm:mcdiarmid} in the appendix}
 					\else
 					\Comment{ PAC bound, see \cite{??}}
 					\fi
 					\State \Return $[y-\varepsilon,y+\varepsilon]$  \Comment{confidence interval}
 				\EndIf
 			\EndFunction
 		\end{algorithmic}
 		\end{minipage}
 \end{algorithm}

A monitor for each individual atom is called an atomic monitor, which serves as the building block for the overall monitor.
Each atomic monitor is constructed by computing an interval estimate of the semantic value $f(\mc)$ for the respective atom $f$ on the unknown POMC $\mc$.
For computing the interval estimate, we use the McDiarmid-style inequality 
\ifarxiv
(see Thm.~\ref{thm:mcdiarmid} in appendix) 
\else
(details are in the longer version \cite{??})
\fi
to find a bound on the width of the interval around the point estimate $\hat{f}(\cdot)$.

McDiarmid's inequality is a concentration inequality bounding the distance between the sample value and the expected value of a function satisfying the bounded difference property when evaluated on independent random variables. 
There are several works extending this result to functions evaluated over a sequence of dependent random variables, including Markov chains \cite{paulin2015concentration,esposito2023concentration,kontorovich2017concentration}. 
In order to use McDiarmid's inequality, we will need the following standard \cite{levin2017markov} additional assumption on the underlying POMC.
\begin{assumption}\label{assump:aperiodicity}
	We assume that the POMCs are aperiodic, and that the mixing time of the POMC is bounded by a known constant $\taumix$.
\end{assumption}

We summarize the algorithmic computation of the atomic monitor in Alg.~\ref{alg:atomic monitor}, and establish its correctness in the following theorem.

\begin{theorem}[Solution of Prob.~\ref{prob:frequentist:quantitative} for atomic formulas]
\label{thm:soundness of atomic monitor}
	Let $\tup{f,\delta}$ be a problem instance where $f\colon \Out^n\to [a,b]$ is an atomic formula for some fixed $n$, $a$, and $b$. 
	Moreover, suppose the given unknown POMC satisfies Assump.~\ref{assump:aperiodicity}.
	Then Algorithm \ref{alg:atomic monitor} implements a monitor solving Problem \ref{prob:frequentist:quantitative} for the given problem instance.
	The monitor requires $\mathcal{O}(n)$-space, and, after arrival of each new observation, computes the updated output in $\mathcal{O}(n)$-time.
\end{theorem}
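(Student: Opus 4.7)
The plan is to establish a McDiarmid-type concentration inequality for the point estimator $\hat{f}(\seq{\rout})$ around its mean, and then use Prop.~\ref{lemma:expectation atom} (which identifies this mean with $f(\mc)$) to conclude. The main technical obstacle is that the observation sequence $\seq{\rout}$ is in general non-Markovian, so the classical concentration bounds for Markov chains cannot be applied directly to $\hat{f}$. I would sidestep this by lifting the estimator to the underlying state space: define $g\colon \Q^t \to \mathbb{R}$ by $g(\seq{q}) \coloneqq \hat{f}(\OutFn(\seq{q}))$, so that $g$ is a deterministic function of the Markov trajectory $\seq{Q}$. By Assump.~\ref{ass:stationarity} and Assump.~\ref{assump:aperiodicity}, $\seq{Q}$ is distributed as an aperiodic, irreducible, stationary Markov chain with mixing time bounded by $\taumix$, which is exactly the setting of Paulin's McDiarmid-type concentration inequality for Markov chains.

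The next step is to verify that $g$ has bounded differences with the correct constants. Perturbing a single state $q_j$ changes $\OutFn(\seq{q})$ only at index $j$, which in turn affects only those windows of length $n$ that cover $j$---at most $n$ for interior indices, and $\min(n, j, t-n+2-j)$ in general. Since each window contributes $f/(t-n+1)\in [a,b]/(t-n+1)$ to $\hat{f}$, the $j$-th bounded-differences constant satisfies $c_j \leq \min(n, j, t-n+2-j)(b-a)/(t-n+1)$, and summing gives the bound $\|c\|_2^2 \leq t \cdot \min(n, t-n+1)(b-a)^2/(t-n+1)^2$ (up to the constant absorbed into the $9\taumix$ factor). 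Substituting these into Paulin's inequality for stationary Markov chains, and converting the pseudo-spectral gap to the mixing time via the standard relation $1/\gamma_{\mathrm{ps}} \leq 9\taumix$, I would obtain
\begin{equation*}
\pr\!\left(\bigl|g(\seq{Q}) - \expe[g(\seq{Q})]\bigr| \geq \varepsilon\right) \leq 2\exp\!\left(-\frac{2\varepsilon^2 (t-n+1)^2}{t\cdot \min(t-n+1,n)\cdot 9\taumix}\right).
\end{equation*}
Setting the right-hand side equal to $\delta$ and solving for $\varepsilon$ recovers exactly the expression computed by Alg.~\ref{alg:atomic monitor}. Combined with Prop.~\ref{lemma:expectation atom}, this yields $\pr(f(\mc)\in [y-\varepsilon,y+\varepsilon]) \geq 1-\delta$, establishing the PAC guarantee required by Prob.~\ref{prob:frequentist:quantitative}.

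For the resource bounds, the monitor maintains a sliding window of length $n$, a step counter $t$, the running average $y$, and a constant number of other scalars, giving $\mathcal{O}(n)$ space. On each new symbol, the window update (line~\ref{step:shift window}) and the evaluation of $f$ on the window each cost $\mathcal{O}(n)$ (assuming the standard linear-time evaluation model for atomic functions, as holds for the star-free regular atoms considered), while the running-average update and the closed-form computation of $\varepsilon$ are $\mathcal{O}(1)$, yielding $\mathcal{O}(n)$ time per observation. I expect the main subtlety of the proof to be two-fold: (i) the careful accounting of bounded-differences constants at the sequence boundaries, which is what produces the tight $\min(n, t-n+1)$ factor rather than a looser $n^2$ bound; and (ii) invoking the precise variant of Paulin's inequality with the correct spectral-gap-to-mixing-time conversion so that the constant $9$ and the factor of $2$ in the denominator match the expression in the algorithm.
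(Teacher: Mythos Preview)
Your proposal is correct and follows essentially the same approach as the paper: the paper also lifts the estimator to the underlying state sequence (invoking what it cites as Corollary~2.17 of~\cite{jerison2013general} for the passage from observations to states), establishes the bounded-differences property (Lemma~\ref{lemma:bounded difference}), applies the McDiarmid-type inequality for stationary Markov chains (Theorem~\ref{thm:mcdiarmid}), and combines with Prop.~\ref{lemma:expectation atom}; your explicit definition of $g=\hat{f}\circ\OutFn$ and your remark on the pseudo-spectral-gap-to-$\taumix$ conversion simply unpack what the paper absorbs into its cited result. The space/time argument is likewise identical, hinging on the $n$-register sliding window at line~\ref{step:shift window}.
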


The confidence intervals generated by McDiarmid-style inequalities for Markov chains tighten in relation to the mixing time of the Markov chain. 
This means the slower a POMC mixes, the longer the monitor needs to watch to be able to obtain an output interval of the same quality.

\subsection{The Complete Monitor}
\label{sec:the complete monitor}

The final monitors for \quanvbse-s and \qualvbse-s are presented in Alg.~\ref{alg:quant monitor} and Alg.~\ref{alg:qual monitor}, respectively, where we recursively combine the interval estimates of the constituent sub-expressions using interval arithmetic and the union bound.
Similar idea was used by Albarghouthi et al.~\cite{albarghouthi2019fairness}.
The correctness and computational complexities of the monitors are formally stated below.

\begin{theorem}[Solution of Prob.~\ref{prob:frequentist:quantitative}]
	Let $\tup{\varphi_1\odot\varphi_2,\delta_1+\delta_2}$ be a problem instance where $\varphi_1,\varphi_2$ are a pair of \quanvbse-s and $\odot\in\set{+,\cdot,\div}$.
	Moreover, suppose the given unknown POMC satisfies Assump.~\ref{assump:aperiodicity}.
	Then Alg.~\ref{alg:quant monitor} implements the monitor $\monitor$ solving Problem~\ref{prob:frequentist:quantitative} for the given problem instance.
	If the total number of atoms in $\varphi_1\odot\varphi_2$ is $k$ and if the arity of the largest atom in $\varphi_1\odot\varphi_2$ is $n$, then $\monitor$ requires $\mathcal{O}(k+n)$-space, and, after arrival of each new observation, computes the updated output in $\mathcal{O}(k\cdot n)$-time.
\end{theorem}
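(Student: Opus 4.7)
The plan is to proceed by structural induction on the \quanvbse $\varphi = \varphi_1 \odot \varphi_2$, leveraging Thm.~\ref{thm:soundness of atomic monitor} as the base-case guarantee for atomic monitors and combining sub-estimates via interval arithmetic together with the union bound. First I would show the base cases: for a constant $\kappa \in \mathbb{R}$ the degenerate interval $[\kappa, \kappa]$ trivially contains the semantic value with probability $1$ (and costs $\mathcal{O}(1)$ space/time), while for an atom $f \in P$ of arity $n_f$, Thm.~\ref{thm:soundness of atomic monitor} already provides a correct atomic monitor consuming $\mathcal{O}(n_f)$ space and $\mathcal{O}(n_f)$ per-step time, with whatever failure budget $\delta$ was allocated.

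For the inductive step, assume by induction that for $i \in \{1,2\}$ the sub-monitor on input $\seq{\rout}$ produces an interval $[l_i(\seq{\rout}), u_i(\seq{\rout})]$ satisfying $\pr_{\seq{\rout}\sim\mc}(\varphi_i(\mc) \in [l_i(\seq{\rout}), u_i(\seq{\rout})]) \geq 1 - \delta_i$. By the union bound, the joint event that both intervals simultaneously cover their respective true values holds with probability at least $1 - \delta_1 - \delta_2$. Conditioned on this joint event, it suffices to produce an interval containing $\varphi_1(\mc) \odot \varphi_2(\mc)$; interval arithmetic provides a sound rule for each $\odot \in \{+, \cdot, \div\}$, e.g., $[l_1,u_1]+[l_2,u_2] = [l_1+l_2, u_1+u_2]$, $[l_1,u_1]\cdot[l_2,u_2] = [\min S, \max S]$ for $S$ the four corner products, and an analogous rule for $\div$ under the standard proviso that $0 \notin [l_2,u_2]$ (which we assume holds since otherwise $\varphi$ is ill-defined on $\mc$). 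Soundness of each rule is a routine monotonicity argument and yields an interval that, on the joint-coverage event, contains $\varphi(\mc)$, establishing Eq.~\ref{equ:problem:frequentist:quantitative} with failure budget $\delta_1 + \delta_2$.

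For the resource bounds I would unfold the recursion over the syntax tree, which has $k$ leaves (the atoms) and $k-1$ internal nodes. The key space-saving observation is that all atomic monitors can share a single sliding window over the last $n$ observations (where $n$ is the maximum arity), since each atom only reads a suffix of this window; each atom then only needs $\mathcal{O}(1)$ additional state for its running point estimate and $\delta_i$. This gives $\mathcal{O}(n) + \mathcal{O}(k) = \mathcal{O}(k+n)$ total space. For time, on each new observation every atom recomputes its point estimate in $\mathcal{O}(n)$ (the evaluation of $f$ on its window), summing to $\mathcal{O}(k \cdot n)$; the $k-1$ interval-arithmetic combinations each cost $\mathcal{O}(1)$, which is absorbed.

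The main obstacle I expect is not the probabilistic accounting — union bound and induction dispose of it cleanly — but rather ensuring that the interval-arithmetic step for division is sound in the presence of finite-sample noise: even when $\varphi_2(\mc) \neq 0$, the confidence interval $[l_2, u_2]$ could straddle zero for small sample sizes, forcing the combined interval to be $(-\infty, \infty)$ or undefined. I would handle this explicitly in Alg.~\ref{alg:quant monitor} by returning an uninformative but formally correct output (e.g., $\mathbb{R}$) in this degenerate regime, which preserves the PAC guarantee while making the inductive invariant go through uniformly.
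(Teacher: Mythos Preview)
Your proposal is correct and follows essentially the same approach the paper sketches: the paper does not give an explicit proof for this theorem, but states in Sec.~\ref{sec:the complete monitor} that the overall monitor is obtained by ``recursively combin[ing] the interval estimates of the constituent sub-expressions using interval arithmetic and the union bound,'' citing Albarghouthi et al.~\cite{albarghouthi2019fairness} for the idea. Your structural induction with the union bound at each internal node and interval arithmetic for each $\odot$ is precisely this, just spelled out in full; your observation that the atomic monitors must share a single length-$n$ sliding window (rather than each maintaining its own, as a literal reading of Alg.~\ref{alg:atomic monitor} inside Alg.~\ref{alg:quant monitor} would suggest) is the key implementation detail needed to justify the $\mathcal{O}(k+n)$ space bound, which the paper leaves implicit.
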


\begin{theorem}[Solution of Prob.~\ref{prob:frequentist:qualitative}]
	Let $\tup{\psi,\delta}$ be a problem instance where $\psi$ is a \qualvbse.
	Moreover, suppose the given unknown POMC satisfies Assump.~\ref{assump:aperiodicity}.
	Then Alg.~\ref{alg:qual monitor} implements the monitor $\monitor$ solving Problem~\ref{prob:frequentist:qualitative} for the given problem instance.
	If the total number of atoms in $\psi$ is $k$ and if the arity of the largest atom in $\psi$ is $n$, then $\monitor$ requires $\mathcal{O}(k+n)$-space, and, after arrival of each new observation, computes the updated output in $\mathcal{O}(k\cdot n)$-time.
\end{theorem}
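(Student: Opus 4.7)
The plan is to reduce the qualitative monitoring problem to repeated use of the atomic monitor of Alg.~\ref{alg:atomic monitor}, combined via a union bound, interval arithmetic, and a three-valued abstraction of the boolean connectives. The argument is structurally parallel to the proof of the quantitative theorem, with the additional step of converting interval estimates into boolean verdicts at each atomic comparison $\varphi\geq 0$.

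First, I would enumerate the $k$ atoms $f_1,\ldots,f_k$ occurring in $\psi$ and, following the pattern of Alg.~\ref{alg:qual monitor}, instantiate an atomic monitor for each $f_i$ with individual failure budget $\delta/k$. By Theorem~\ref{thm:soundness of atomic monitor}, the output interval $[l_i,u_i]$ of the $i$-th monitor contains $f_i(\mc)$ with probability at least $1-\delta/k$ after every observation. A union bound then shows that the event $E$ that all $k$ intervals are simultaneously valid has probability at least $1-\delta$; all correctness claims below are established conditional on $E$.

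Second, I would proceed by structural induction on $\psi$. For each quantitative subexpression $\varphi$, Alg.~\ref{alg:qual monitor} maintains an interval $[l_\varphi,u_\varphi]$ obtained by standard interval arithmetic over $\set{+,\cdot,\div}$ from the atomic intervals, which provably contains $\varphi(\mc)$ on $E$. For each atomic boolean subexpression $\varphi\geq 0$, the algorithm returns $\true$ when $l_\varphi\geq 0$, $\false$ when $u_\varphi<0$, and an \emph{inconclusive} marker otherwise; the connectives $\lnot$ and $\land$ are lifted to Kleene's strong three-valued logic. By soundness of interval arithmetic and of three-valued logic, any conclusive verdict at the root of $\psi$ matches $\psi(\mc)$ on $E$.

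The step I expect to require the most care is the passage from the three-valued evaluation to the boolean output alphabet $\set{\true,\false}$ in a way that satisfies the conditional PAC bound of Eq.~\ref{equ:problem:frequentist:qualitative}. The key observation will be that the event $\sem{\monitor}(\seq{\rout})=\true\land\lnot\psi(\mc)$ entails the failure of at least one atomic interval and thus has probability at most $\delta$, which yields the required bound (and symmetrically for the $\false$ verdict). Finally, for the complexity claims, each atomic monitor occupies $\mathcal{O}(n)$ space and $\mathcal{O}(n)$ time per update by Theorem~\ref{thm:soundness of atomic monitor}; sharing the sliding window of length $n$ across the $k$ monitors brings the total space to $\mathcal{O}(k+n)$, and a single bottom-up traversal of the syntax tree of $\psi$ adds $\mathcal{O}(k)$ overhead per observation, yielding the advertised $\mathcal{O}(k\cdot n)$ update time.
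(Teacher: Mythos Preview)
The paper does not spell out a proof of this theorem; the result is stated immediately after Alg.~\ref{alg:qual monitor}, and correctness is left implicit in the recursive structure of the algorithm together with Thm.~\ref{thm:soundness of atomic monitor} and the quantitative theorem. Your proposal is precisely the natural reconstruction of that implicit argument---union bound over the atomic monitors, soundness of interval arithmetic for the $\varphi$-layer, and a Kleene three-valued lifting for the $\psi$-layer---and it matches the algorithm's design step for step. The complexity accounting, including the shared length-$n$ sliding window to get $\mathcal{O}(k+n)$ space, also matches the claimed bounds.

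One caveat is worth flagging. Your ``key observation'' establishes the joint bound
\[
\pr_\mc\!\left(\sem{\monitor}(\seq{\rout})=\true \ \wedge\ \lnot\psi(\mc)\right)\leq\delta,
\]
and you then assert that this ``yields the required bound'' of Eq.~\eqref{equ:problem:frequentist:qualitative}. But $\psi(\mc)$ is deterministic for a fixed $\mc$, so the conditional probability $\pr_\mc\!\left(\psi(\mc)\mid\sem{\monitor}(\seq{\rout})=\true\right)$ is literally $0$ or $1$; when $\psi(\mc)=\false$ and the monitor outputs $\true$ with any positive probability, the conditional inequality as written cannot hold for $\delta<1$. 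What your argument actually proves---and what any sound monitor can deliver here---is the unconditional error bound on conclusive verdicts (equivalently: if $\psi(\mc)=\false$ then $\pr_\mc(\sem{\monitor}(\seq{\rout})=\true)\leq\delta$, and symmetrically). This is a quirk of the problem formulation in the paper rather than a defect in your reasoning, but you should be explicit about which bound you are deriving instead of claiming the conditional one follows.
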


	\begin{algorithm}
		\caption{$\mathit{Monitor}_{(\psi,\delta)}$}
		\label{alg:qual monitor}
		\begin{minipage}{0.47\textwidth}
		\begin{algorithmic}[1]
			\Function{$\mathit{Init}()$}{}
				\If{$\psi\equiv \varphi \geq 0$}
					\State $\aut\gets \mathit{Monitor}_{(\varphi,\delta)}$
					\State $\aut.\mathit{Init}()$
				\ElsIf{$\psi \equiv \lnot \psi_1$}
					\State $\aut \gets \mathit{Monitor}_{(\psi_1,\delta)}$
					\State $\aut.\mathit{Init}()$
				\ElsIf{$\psi\equiv \psi_1\land \psi_2$}
					\State Choose $\delta_1,\delta_2$ s.t.\ $\delta = \delta_1+\delta_2$
					\State $\aut_1\gets \mathit{Monitor}_{(\psi_1,\delta_1)}$
					\State $\aut_2\gets \mathit{Monitor}_{(\psi_2,\delta_2)}$
					\State $\aut_1.\mathit{Init}()$
					\State $\aut_2.\mathit{Init}()$
				\EndIf
			\EndFunction
		\end{algorithmic}
		\end{minipage}
		\begin{minipage}{0.52\textwidth}
		\begin{algorithmic}[1]
			\Function{$\mathit{Next}$}{$\sigma$}
				\If{$\psi\equiv \varphi \geq 0$}
					\State $[l,u]\gets\aut.\mathit{Next}(\sigma)$
					\If{$l\geq 0$}
						\Return $\mathit{true}$
					\ElsIf{$u\leq 0$}
						\Return $\mathit{false}$
					\Else\
						\Return $\bot$ \Comment{don't know, we assume $\lnot\bot = \bot\land \true = \bot\land\false=\bot$.}
					\EndIf
				\ElsIf{$\psi \equiv \lnot \psi_1$}
					\State \Return $\lnot \left(\aut.\mathit{Next}(\sigma)\right)$
				\ElsIf{$\psi\equiv \psi_1\land \psi_2$}
					\State \Return $\aut_1.\mathit{Next}(\sigma)\land\aut_2.\mathit{Next}(\sigma)$
				\EndIf
			\EndFunction
		\end{algorithmic}
		\end{minipage}
	\end{algorithm}

\section{Experiments}

We implemented our monitoring algorithm in Python, and applied it to the real-world lending example \cite{damour2020fairness} described in Ex.~\ref{ex:lending POMC} and to an academic example called hypercube. 
We ran the experiments on a MacBook Pro (2023) with Apple M2 Pro processor and 16GB of RAM.

\smallskip
\noindent\textbf{The Lending Example.}
The underlying POMC model (unknown to the monitor) of the system is approximately as shown in Fig.~\ref{fig:illustrative POMC model of loan example} with a few differences.
Firstly, we added a low-probability self-loop on the state $S$ to ensure aperiodicity.
Secondly, we considered only two credit score levels.
\begin{wrapfigure}{r}{0.46\textwidth}
\begin{minipage}{0.46\textwidth}
	\vspace{-1.5cm}
	\begin{algorithm}[H]
		\caption{$\mathit{Monitor}_{(\varphi_1\odot\varphi_2,\delta_1+\delta_2)}$}
		\label{alg:quant monitor}
		\begin{algorithmic}[1]
			\Function{$\mathit{Init}()$}{}
				\State $\aut_1\gets \mathit{Monitor}_{(\varphi_1,\delta_1)}$
				\State $\aut_2\gets \mathit{Monitor}_{(\varphi_2,\delta_2)}$
				\State $\aut_1.\mathit{Init}()$
				\State $\aut_2.\mathit{Init}()$
			\EndFunction
		\end{algorithmic}
		\begin{algorithmic}[1]
			\Function{$\mathit{Next}$}{$\sigma$}
				\State $[l_1,u_1]\gets \aut_1.\mathit{Next}(\sigma)$
				\State $[l_2,u_2]\gets \aut_2.\mathit{Next}(\sigma)$
				\State \Return $[l_1,u_1]\odot [l_2,u_2]$ \Comment{interval arithmetic}
			\EndFunction
		\end{algorithmic}
	\end{algorithm}
	\vspace{-1.5cm}
\end{minipage}
\end{wrapfigure}
Thirdly, there are more hidden states (in total $171$ states) in the system, like the action of the individual (repaying the loan or defaulting), etc.
We monitor demographic parity, defined as $\varphi_{\mathsf{DP}} \coloneqq P(Y \mid A)-P(Y\mid B)$, and an absolute version of it, defined as $\varphi_{\mathsf{TDP}} \coloneqq P(AY) - P(BY)$.
While $\varphi_{\mathsf{DP}}$ represents the difference in probabilities of giving loans to individuals from the two groups ($A$ and $B$), $\varphi_{\mathsf{TDP}}$ represents the difference in joint probabilities of selecting and then giving loans to individuals from the two groups. 
None of the two properties can be expressed using the previous formalism \cite{albarghouthi2019fairness,henzinger2023monitoring}, because $\varphi_{\mathsf{DP}}$ requires conditioning on observations, and $\varphi_{\mathsf{TDP}}$ requires expressing absolute probabilities, which were not considered before.

After receiving new observations, the monitors for $\varphi_{\mathsf{DP}}$ and $\varphi_{\mathsf{TDP}}$ took, respectively, $47\,\si{\microsecond}$ and $18\,\si{\microsecond}$ on an average (overall $43\,\si{\microsecond}$--$0.2\,\si{\second}$ and $12\,\si{\microsecond}$--$3.2\,\si{\second}$) to update their outputs, showing that our monitors are fast in practice.

Fig.~\ref{fig:lending} shows the outputs of the monitors for $\delta=0.05$ (i.e., $95\%$ confidence interval).
For the POMC of the lending example, we used a pessimistic bound $\taumix=170589.78$~steps on the mixing time (computation as in \cite{jerison2013general}), with which the estimation error $\varepsilon$ shrinks rather slowly in both cases. 
For example, for $\varphi_{\mathsf{TDP}}$, in order to get from the trivial value $\varepsilon=1$ (the confidence interval spans the entire range of possible values) down to $\varepsilon=0.1$, the monitor requires about $4\cdot 10^9$ observations.
For $\varphi_{\mathsf{DP}}$, the monitor requires even more number of observations ($\sim 10^{12}$) to reach the same error level.
This is because $\varphi_{\mathsf{DP}}$ involves conditional probabilities requiring divisions, which amplify the error when composed using interval arithmetics.
We conclude that a direct division-free estimation of the conditional probabilities, together with tighter bounds on the mixing time will significantly improve the long-run accuracy of the monitor.

\smallskip
\noindent\textbf{The Hypercube Example.}
We considered a second example \cite[pp.\ 63]{levin2017markov}, whose purpose is to demonstrate that the tightness of our monitors' outputs is sensitive to the choice of the bound on the mixing time.
The POMC models a random walk along the edges of a hypercube $\{0,1\}^n$, where each vertex of the hypercube represents a state in the POMC and states starting with $0$ and $1$ are mapped to the observations $a$ and $b$, respectively.
We fix $n$ to $3$ in our experiments.
At every step, the current vertex is chosen with probability $\sfrac{1}{2}$, and every neighbor is chosen with probability $\sfrac{1}{2n}$.
A tight bound on the mixing time of this POMC is given by $\tau_{\mathsf{true\,mix}} =n(\log{n} + \log{4})$~steps \cite[pp.\ 63]{levin2017markov}.
We consider the properties $\psi_{\mathsf{DP}} \coloneqq P(a \mid a) - P(b \mid b)$ and $\psi_{\mathsf{TDP}} \coloneqq P(aa) - P(bb)$.

\begin{figure}
	\begin{subfigure}{0.245\linewidth}
		\includegraphics[width=\linewidth]{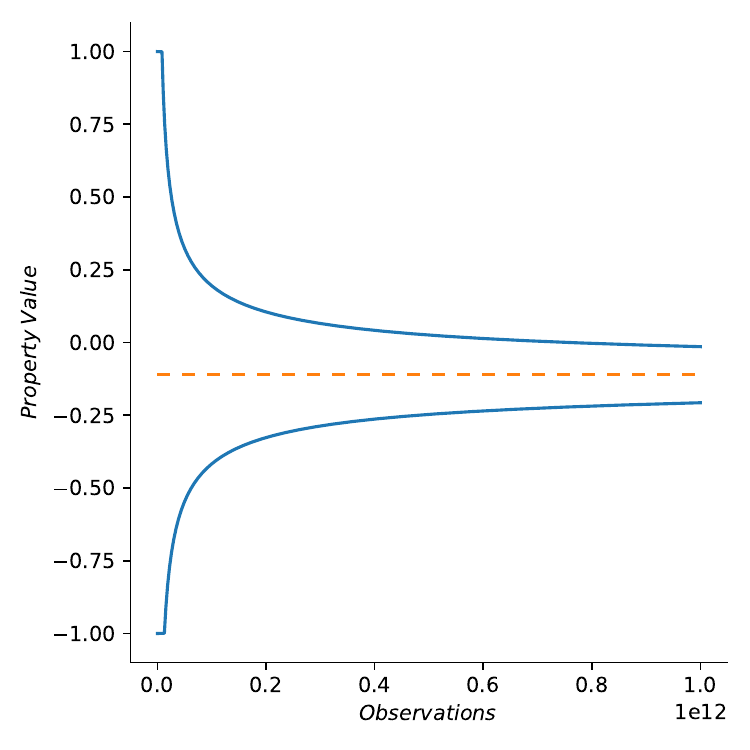}
		\phantomsubcaption
		\label{subfig:a}
	\end{subfigure}
	\begin{subfigure}{0.245\linewidth}
		\includegraphics[width=\linewidth]{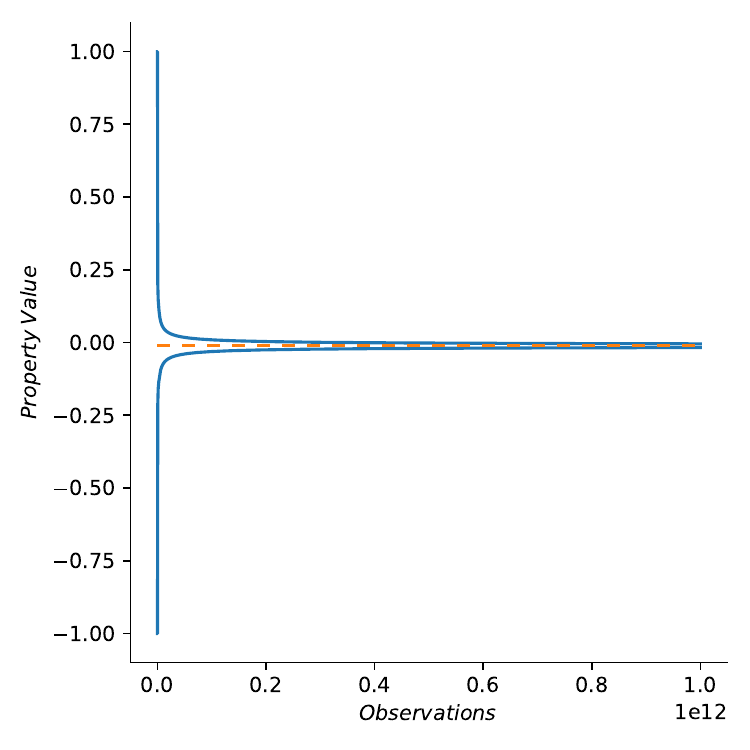}
		\phantomsubcaption
		\label{subfig:b}
	\end{subfigure}
	\begin{subfigure}{0.245\linewidth}
		\includegraphics[width=\linewidth]{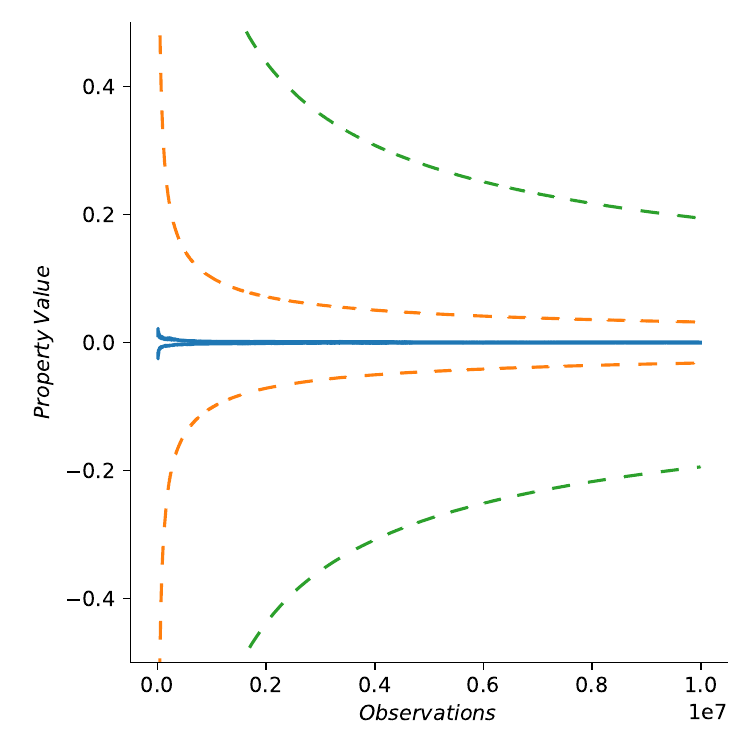}
		\phantomsubcaption
		\label{subfig:c}
	\end{subfigure}
	\begin{subfigure}{0.245\linewidth}
		\includegraphics[width=\linewidth]{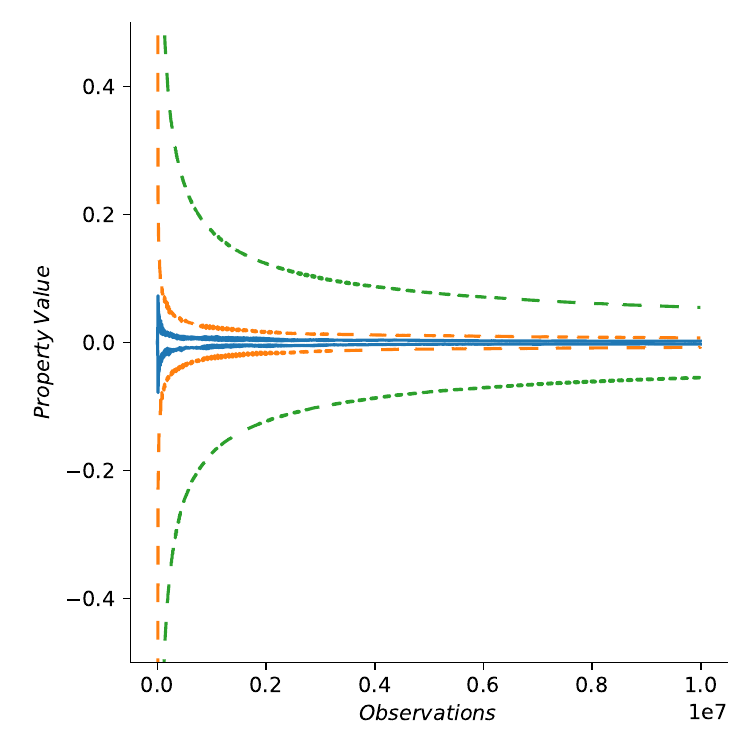}
		\phantomsubcaption
		\label{subfig:d}
	\end{subfigure}
  \caption{Monitoring $\varphi_{\mathsf{DP}}$ (first, third) and $\varphi_{\mathsf{TDP}}$ (second, fourth) on the lending (first, second) and the hypercube (third, fourth) examples. 
  The \underline{first and second} plots show the computed $95\%$-confidence interval (solid) and the true value of the property (dashed) for the lending POMC.
  In reality, the monitor was run for about $7\times 10^8$ steps until the point estimate nearly converged, though the confidence interval was trivial at this point (the whole interval $[-1,1]$), owing to the pessimistic bound $\taumix$.
  In the figure, we have plotted a projection of how the confidence interval would taper over time, had we kept the monitor running.
  The \underline{third and fourth} plots summarize the monitors' outputs over $100$ executions of the hypercube POMC. 
  The solid lines are the max and min values of the point estimates, the dashed lines are the boundaries of all the $95\%$-confidence intervals (among the $100$ executions) with the conservative bound $\taumix$ (green) and the sharper bound $\tau_{\mathsf{true\,mix}}$ (orange) on the mixing time.
  }
\label{fig:lending}
\end{figure}

We empirically evaluated the quality of the confidence intervals computed by our monitor (for $\psi_{\mathsf{DP}}$ and $\psi_{\mathsf{TDP}}$) over a set of $100$ sample runs, and summarize the findings in the third and fourth plots of Fig.~\ref{fig:lending}.
We used $\taumix= 204.94$~steps and $\tau_{\mathsf{true\, mix}}= 7.45$~steps, and we can observe that in both cases, the output with $\tau_{\mathsf{true\,mix}}$ is significantly tighter than with $\taumix$.
Compared to the lending example, we obtain reasonably tight estimate with significantly smaller number of observations, which is due to the smaller bounds on the mixing time.

	\section{Conclusion}
	We generalized runtime verification of algorithmic fairness properties to systems modeled using POMCs and a specification language (BSE) with arithmetic expressions over numerical functions assigning values to observation sequences.
	Under the assumptions of stationary initial distribution, aperiodicity, and the knowledge of a bound on the mixing time, we presented a runtime monitor, which monitors a long sequence of observations generated by the POMC, and after each observation outputs an updated PAC estimate of the value of the given \vbse.
		
	While the new stationarity assumption is important for defining the semantics of the \vbse expressions, the aperiodicity and the knowledge of the bound on the mixing time allow us to use the known McDiarmid's inequality for computing the PAC estimate.
	In future, we intend to eliminate the latter two assumptions, enabling us to use our approach for a broader class of systems.
	Additionally, eliminating the time-homogeneity assumption would also be an important step for monitoring algorithmic fairness of the real-world systems with time-varying probability distributions \cite{henzinger2023runtime}. 
	
	\subsubsection*{Acknowledgments:}
	This work is supported by the European Research Council under Grant No.: ERC-2020-AdG 101020093.
	
	\bibliographystyle{splncs04}
	\bibliography{references}

	\ifarxiv
	\begin{appendix}

\section{Proof of Claims in Sec.~\ref{sec:point estimator for atoms}}

\begin{proof}[Proof of Prop.~\ref{lemma:expectation atom}]
	Let $N = t - n +1$. By definition,
	\begin{align*}
		\expe(\hat{f}(\seq{\rout}))  =  \sum_{\seq{w} \in \Out^{t}} \left(\frac{1}{N} \sum_{i=1}^{N} f(\seq{w}_{i\twodots i+n-1}) \right) \cdot \pr(\seq{w})
	\end{align*}
	where $\pr(\seq{u})$ is defined with respect to the Markov chain. 
	We now express this with respect to the joint probability measure over paths of length $t$ as defined by $\mc$.
	\begin{align*}
		\expe(\hat{f}(\seq{\rout})) & =    \sum_{\seq{q} \in \Q^{t}} \left(\frac{1}{N} \sum_{i=1}^{N} f(\OutFn(\seq{q}_{i\twodots i+n-1})) \right) \cdot \pr(\seq{q}) \\
		&=\sum_{\seq{q} \in \Q^{t}} \frac{1}{N} \sum_{i=1}^{N} f(\OutFn(\seq{q}_{i\twodots i+n-1})) \cdot \pr(\seq{q}) \\
		&=\frac{1}{N} \sum_{i=1}^{N} \sum_{\seq{q} \in \Q^{t}}  f(\OutFn(\seq{q}_{i\twodots i+n-1})) \cdot \pr(\seq{q}).
	\end{align*}
	Fix a particular $i$, i.e. let $A_i\coloneqq \sum_{\seq{q} \in \Q^{t}} f(\OutFn(\seq{q}_{i\twodots i+n-1})) \cdot \pr(\seq{q})$. Split up the internal sum to obtain
	\begin{align*}
		A_i = \sum_{\seq{q}^A \in \Q^{i}}  \sum_{\seq{q}^B \in \Q^{n}}  \sum_{\seq{q}^C \in \Q^{t-n-i}}   f(\OutFn(\seq{q}^B)) \cdot \pr(\seq{q}^A) \cdot \pr(\seq{q}^B \mid q_i^A) \cdot \pr(\seq{q}^C \mid q_n^B).
	\end{align*}
	Now notice that $\sum_{\seq{q}^C \in \Q^{t-n-i}} \pr(\seq{q}^C \mid \seq{q}_n^B)  =1$. Hence, by rearranging the sums we obtain
	\begin{align*}
		A_i &=   \sum_{\seq{q}^B \in \Q^{n}}  f(\OutFn(\seq{q}^B)) \cdot\sum_{\seq{q}^A \in \Q^{i}}  \pr(\seq{q}^A) \cdot \pr(\seq{q}^B \mid \seq{q}_i^A) \\
		&= \sum_{\seq{q}^B \in \Q^{n}}  f(\OutFn(\seq{q}^B))  \cdot \prod_{j=1}^{n-1} M_{q_j^Bq_{j+1}^B} \cdot \sum_{\seq{q}^A \in \Q^{i}} \st_{q_1^A} \cdot \prod_{j=1}^{i-1} M_{q_j^Aq_{j+1}^A} \cdot  M_{q_i^Aq_1^B}.
	\end{align*}
We can use the fact that $\mc$ is in its stationary distribution to express the internal sum as $\st_{q_1^B}$. That is, 
\begin{align*}
	&\sum_{\seq{q}^A \in \Q^{i}} \st_{q_1^A} \cdot \prod_{j=1}^{i-1} M_{q_j^Aq_{j+1}^A} \cdot  M_{q_i^Aq_1^B} = 
	\sum_{q_i^A} \dots \sum_{q_2^A}  \sum_{q_1^A}  \st_{q_1^A} \cdot \prod_{j=1}^{i-1} M_{q_j^Aq_{j+1}^A} \cdot  M_{q_i^Aq_1^B}  \\
	& = \sum_{q_i^A}   M_{q_i^Aq_1^B} \dots \sum_{q_2^A} M_{q_2^Aq_{3}^A}  \cdot \sum_{q_1^A}  \st_{q_1^A}  \cdot M_{q_1^Aq_{2}^A}  =  \sum_{q_i^A}   M_{q_i^Aq_1^B} \dots \sum_{q_2^A} M_{q_2^Aq_{3}^A} \st_{q_2^A}   \\
	& =  \sum_{q_i^A}   M_{q_i^Aq_1^B} \cdot  \st_{q_i^A}  =  \st_{q_1^B}.  \\
\end{align*}
Hence, we obtain 
\begin{align*}
	A_i
	&= \sum_{\seq{q}^B \in \Q^{n}}  f(\OutFn(\seq{q}^B))  \cdot \prod_{j=1}^{n-1} M_{q_j^Bq_{j+1}^B} \cdot  \st_{q_1^B} = \expe(f(\seq{U}))
\end{align*}
where $\seq{U}$ is a random word of length $n$ generated by $\mc$.
Therefore, we obtain
\begin{align*}
	\expe(\hat{f}(\seq{\rout}))=\frac{1}{N} \sum_{i=1}^{N} \expe(f(\seq{U})) =  \expe(f(\seq{U})).
\end{align*}
Moreover, this demonstrates that due to stationarity the expected value of $f$ evaluated on any infix of length $n$ is the same and thus $\expe(\hat{f}(\seq{\rout}))=f(\mc)$.
\end{proof}

\begin{proof}[Proof of Cor.~\ref{lemma:probability atom}]
	This follows directly from Prop.~\ref{lemma:expectation atom} and the observation that $f(\OutFn(\seq{q}^B)) $ removes the probability mass of all the paths of length $n$ whose corresponding words do not belong to $\specset$. Therefore, $\expe(f(\seq{U}))=P(\specset)$ where $\seq{U}$ is as defined in Prop.~\ref{lemma:expectation atom}.
\end{proof}

\section{Proof of Thm.~\ref{thm:soundness of atomic monitor}}

We use a McDiarmid-style inequality to compute the finite-sample confidence bounds.
The version below is a restricted version of the Corollary 2.19 found in \cite{jerison2013general}.

\begin{theorem}[\cite{jerison2013general}]
	\label{thm:mcdiarmid}
	Let $\seq{X}\coloneqq X_1, \dots, X_n$ be an ergodic Markov chain with countable state space $\Q$, unique stationary distribution $\st$, and finite mixing time bounded by $\tau_{mix}$.
	Suppose that some function $f: \Q^n \to \mathbb{R}$ with artiy $n$ satisfies
	\begin{align*}
		f(x)- f(y) \leq \sum_{i=1}^gn  c_i \mathbbm{1}(x_i \neq y_i)
	\end{align*}
	for some $c\in \mathbb{R}^n$ with positive entries. Then for any $\varepsilon>0$
	\begin{align*}
		\pr\left(\left| f(\seq{X}) - \expe(f(\seq{X}))\right|\geq \varepsilon \right) \leq 2\exp\left( - \frac{2\varepsilon^2}{\sqrt{\sum_{i=1}^n c_i^2}^2 \cdot 9\cdot \tau_{mix}} \right)
	\end{align*}

\end{theorem}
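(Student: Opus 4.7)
The plan is to prove the concentration inequality by a Doob martingale plus coupling argument, which is the standard route for McDiarmid-type bounds in the Markov-dependent setting and which is exactly how the cited result from \cite{jerison2013general} is established.

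First, I would set up the Doob martingale $M_k = \expe[f(\seq{X}) \mid X_1,\ldots,X_k]$ for $k=0,\ldots,n$, so that $M_0 = \expe[f(\seq{X})]$ and $M_n = f(\seq{X})$, with martingale differences $D_k = M_k - M_{k-1}$. In the classical i.i.d.\ McDiarmid setting, the bounded-difference hypothesis immediately gives $|D_k|\le c_k$ and Azuma--Hoeffding yields the inequality. The whole difficulty in the Markov case is that conditioning on $X_k$ changes not only the $k$-th coordinate but, through the transition kernel, the joint law of all future coordinates $X_{k+1},\ldots,X_n$.

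Second, to control $D_k$ I would use a coupling argument: for any two candidate values $x,x'$ of $X_k$ (sharing the same prefix), build a coupling $(\seq{Y},\seq{Y}')$ of the two conditional chains such that the coupling time $T$ after step $k$ is short in expectation, using the mixing-time assumption. The definition of $\taumix$ guarantees that after $\taumix$ steps the total-variation distance between the chains started from $x$ and $x'$ is at most $1/4$; iterating $m$ blocks of length $\taumix$ shrinks this to $(1/4)^m$, so the coupling succeeds quickly. Using this coupling, I would write
\[
|D_k| \le c_k + \sum_{j>k} c_j \cdot \pr(Y_j \neq Y'_j \mid X_1,\ldots,X_k),
\]
and bound the tail sum by a geometric series whose rate is governed by $\taumix$. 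Careful bookkeeping of the constants in this block decomposition is what produces the factor $9\cdot\taumix$ multiplying $\sum c_i^2$ in the exponent; this is the delicate part inherited from Jerison's splitting argument.

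Third, having obtained effective bounded differences $|D_k|\le \tilde c_k$ with $\sum \tilde c_k^2 \le 9\,\taumix \sum c_i^2$, I would conclude by Azuma--Hoeffding applied to the martingale $\{M_k\}$, which yields
\[
\pr\bigl(|f(\seq{X}) - \expe f(\seq{X})| \ge \varepsilon\bigr) \le 2\exp\!\left(-\frac{2\varepsilon^2}{\sum_k \tilde c_k^2}\right) \le 2\exp\!\left(-\frac{2\varepsilon^2}{9\,\taumix\,\sum_i c_i^2}\right),
\]
which matches the statement (noting that $\sqrt{\sum c_i^2}^2 = \sum c_i^2$).

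The main obstacle is the second step: producing the right coupling and tracking constants tightly enough to obtain the claimed $9\,\taumix$ factor rather than a looser bound. Once the coupling-based bounded-difference estimate is in place, the rest is a direct invocation of Azuma--Hoeffding. Since the statement is explicitly a restriction of \cite[Cor.\ 2.19]{jerison2013general}, an acceptable short proof is simply to specialize that corollary to our setting (a countable state space, stationary start, and the uniform bounded-difference condition with vector $c$) and verify that all hypotheses of Jerison's corollary are met; the heavy coupling machinery is then imported as a black box.
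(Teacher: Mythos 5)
Your proposal is consistent with the paper: the paper gives no proof of this theorem at all, presenting it as a restricted form of Corollary~2.19 of the cited reference, so your closing suggestion---specialize that corollary and import the coupling machinery as a black box---is exactly what the paper does. Your Doob-martingale-plus-coupling sketch is a correct outline of how the cited result is actually established in the source (and it correctly locates the delicate point in extracting the $9\cdot\tau_{mix}$ constant from the block/coupling bookkeeping), but none of that machinery appears in the paper itself.
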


To apply Theorem \ref{thm:mcdiarmid} it is required that the observation labels should not interfere with the so-called bounded difference property of the function. 
Below we establish that this requirement is fulfilled by the atoms of \vbse.

\begin{lemma}\label{lemma:bounded difference}
	Let $f\colon\Out^n\to[a,b]$ be a function with fixed $n$, $a$, and $b$, $t\geq n$ be a constant, $\seq{\out}, \seq{\out}' \in \Out^t$ be a pair of observation sequences such that the Hamming distance $|\seq{\out}-\seq{\out}'|_H$ is $1$. 
	Then 
	\begin{align*}
		\hat{f}(\seq{\out})-\hat{f}(\seq{\out}') \leq \frac{\min(t-n+1,n)}{t-n+1} \cdot (b-a).
	\end{align*}
\end{lemma}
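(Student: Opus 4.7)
The plan is to exploit the fact that $\hat{f}$ is a sliding-window average: among the $t-n+1$ length-$n$ windows contributing to $\hat{f}(\seq{\out})$, only those that physically contain the single differing position can contribute anything to the difference $\hat{f}(\seq{\out}) - \hat{f}(\seq{\out}')$, and the rest cancel exactly.

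Concretely, let $k\in\{1,\dots,t\}$ be the unique index at which $\seq{\out}$ and $\seq{\out}'$ disagree. The window $\seq{\out}_{i\twodots i+n-1}$ starting at position $i$ contains index $k$ iff $i\leq k\leq i+n-1$, i.e.\ $\max(1, k-n+1)\leq i\leq \min(t-n+1, k)$. For any $i$ outside this range, $\seq{\out}_{i\twodots i+n-1}=\seq{\out}'_{i\twodots i+n-1}$, so $f$ evaluates to the same value on both, and the corresponding terms in the defining sum of $\hat{f}$ cancel. Let $I\subseteq\{1,\dots,t-n+1\}$ denote the set of window-start indices that do contain position $k$.

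The first step is to bound $|I|$. The interval $[\max(1,k-n+1),\min(t-n+1,k)]$ has length at most $n$ (since the unconstrained interval $[k-n+1,k]$ has length $n$) and at most $t-n+1$ (since $I$ is a subset of $\{1,\dots,t-n+1\}$), so $|I|\leq \min(n,t-n+1)$. The second step is to bound each surviving term: since $f$ takes values in $[a,b]$, for every $i\in I$ we have $f(\seq{\out}_{i\twodots i+n-1}) - f(\seq{\out}'_{i\twodots i+n-1})\leq b-a$.

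Combining these two ingredients,
\begin{align*}
\hat{f}(\seq{\out})-\hat{f}(\seq{\out}')
&= \frac{1}{t-n+1}\sum_{i\in I}\bigl(f(\seq{\out}_{i\twodots i+n-1}) - f(\seq{\out}'_{i\twodots i+n-1})\bigr) \\
&\leq \frac{|I|}{t-n+1}\cdot (b-a)
\leq \frac{\min(n,t-n+1)}{t-n+1}\cdot(b-a),
\end{align*}
which is the claimed inequality. There is no real obstacle here; the only mild subtlety is the boundary casework in estimating $|I|$, which is handled cleanly by writing the range as an intersection of $[k-n+1,k]$ with $\{1,\dots,t-n+1\}$ and taking the minimum of the two natural upper bounds.
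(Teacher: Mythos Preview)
Your proof is correct and follows the same approach as the paper's: identify the at most $\min(n,t-n+1)$ windows containing the single differing position, observe that all other terms cancel, and bound each affected term by $b-a$. Your version is simply a more explicit write-up of the same sliding-window counting argument the paper sketches informally.
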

\begin{proof}
	Since the Hamming distance is $1$, $\seq{\out}$ and $\seq{\out}'$ differ only in one symbol. 
	We know that $f$ is evaluated on a substring of length $n$. Hence, if the string is sufficiently long only $n$ evaluations of $f$ in $\hat{f}$ (i.e., only $n$ terms in the sum in Eq.~\ref{equ:point estimator}) will be affected, while if the string is short, then only $t-n+1$ evaluations of $f$ will be affected. Therefore, the evaluation of $f$ can differ in the worst case by at most $\frac{\min(t-n+1,n)}{t-n+1} \cdot (b-a)$.
\end{proof}

\begin{lemma}
	\label{lemma:bounded_vale}
	Let $\mc $ be a POMC satisfying Assump.~\ref{ass:stationarity} and \ref{assump:aperiodicity}, $f:\Out^n\to[a,b]$ be a function for a fixed $n$, $a$, and $b$, $t\geq n$ be a constant, and $\seq{\rout} \sim \mc$ be a random observed $\mc$-path of length $|\seq{\rout}|=t$.
	Then
	\begin{align*}
		\pr\left( |f(\seq{\mc}) - \hat{f}(\seq{\rout})|\geq \varepsilon \right) \leq 2\exp\left(- \frac{2\cdot \varepsilon^2 (t-n+1)^2}{t\cdot \min(t-n+1,n) \cdot 9 \cdot \tau_{mix}}\right).
	\end{align*}
\end{lemma}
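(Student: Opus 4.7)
The plan is to apply the Markov-chain McDiarmid inequality (Theorem~\ref{thm:mcdiarmid}) to the \emph{internal} state sequence of $\mc$ rather than to the observed sequence $\seq{\rout}$ directly. The observed sequence is in general non-Markovian (as Example~\ref{ex:two coins:one way switch} illustrates), so a concentration inequality cannot be invoked on it, but the underlying state sequence is Markovian and, by Assumptions~\ref{ass:stationarity} and~\ref{assump:aperiodicity}, ergodic with mixing time bounded by $\taumix$ and initialized in its stationary distribution.

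First I would center the deviation. By Proposition~\ref{lemma:expectation atom}, $\expe(\hat{f}(\seq{\rout})) = f(\mc)$, so the probability in question is exactly the tail $\pr(|\hat{f}(\seq{\rout}) - \expe(\hat{f}(\seq{\rout}))| \geq \varepsilon)$. Next I would lift $\hat{f}$ to the state space by defining $g\colon \Q^t \to \mathbb{R}$ via $g(\seq{q}) \coloneqq \hat{f}(\OutFn(\seq{q}))$, so that $g$ applied to the random internal path of length $t$ coincides in distribution with $\hat{f}(\seq{\rout})$.

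The crux is to verify the bounded-difference property for $g$ in terms of the state coordinates, which is the hypothesis of Theorem~\ref{thm:mcdiarmid}. If two state sequences $\seq{q}, \seq{q}' \in \Q^t$ differ in exactly one position, then, since $\OutFn$ acts pointwise, their images $\OutFn(\seq{q})$ and $\OutFn(\seq{q}')$ have Hamming distance at most one (possibly zero, if the two distinct states carry the same observation). Lemma~\ref{lemma:bounded difference} then applies to $\hat{f}$ and yields $|g(\seq{q}) - g(\seq{q}')| \leq c$ with $c = \frac{\min(t-n+1,n)(b-a)}{t-n+1}$, uniformly in the coordinate. Setting $c_i = c$ for each $i \in \{1,\dots,t\}$ gives $\sum_{i=1}^{t} c_i^2 = t\,c^2$, and substituting this into Theorem~\ref{thm:mcdiarmid} together with the ergodicity/mixing-time assumptions produces the stated sub-Gaussian tail, after routine algebraic simplification of the exponent against $(t-n+1)^2$.

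The main obstacle is conceptual rather than computational: the McDiarmid-style inequality requires bounded differences with respect to the \emph{Markovian} variables, i.e., the states, whereas Lemma~\ref{lemma:bounded difference} is phrased in terms of Hamming distance on \emph{observations}. The pointwise nature of $\OutFn$ makes this lift immediate, but it is the essential reason the argument works only after passing from $\seq{\rout}$ to the underlying internal path, and is also the step that requires the partial-observability map $\OutFn$ to be deterministic (which it is, by definition of a POMC here). Everything else is bookkeeping.
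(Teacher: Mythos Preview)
Your proposal is correct and follows essentially the same approach as the paper: establish the bounded-difference constant $c_i = \frac{\min(t-n+1,n)(b-a)}{t-n+1}$ via Lemma~\ref{lemma:bounded difference}, compute $\sum_i c_i^2 = t\,c^2$, and feed this into the McDiarmid inequality of Theorem~\ref{thm:mcdiarmid}, using Proposition~\ref{lemma:expectation atom} to identify $\expe(\hat{f}(\seq{\rout}))$ with $f(\mc)$. The paper's proof is terser and leaves the passage from the (non-Markovian) observation sequence to the (Markovian) state sequence implicit, deferring it to Corollary~2.17 of~\cite{jerison2013general} in the proof of Theorem~\ref{thm:soundness of atomic monitor}; you instead spell out this lift explicitly via $g(\seq{q}) \coloneqq \hat{f}(\OutFn(\seq{q}))$ and the observation that a single-coordinate change in $\seq{q}$ induces at most a single-coordinate change in $\OutFn(\seq{q})$, which is the cleaner and more self-contained presentation.
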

\begin{proof}
	Notice that $\seq{\out}, \seq{\out}' \in \Out^t$
	\begin{align*}
		|f({\out})-f(\seq{\out}')| \leq \sum_{i=1}^{t} \frac{\min(t-n+1,n)}{t-n+1} \cdot (b-a) \cdot \mathbbm{1}(\out_i \neq \out_i')
	\end{align*}
	Therefore, we conclude that
	\begin{align*}
		\left(\sqrt{\sum_{i=1}^t  \left(\frac{\min(t-n+1,n)}{t-n+1} \cdot (b-a)\right)^2}\right)^2 =  \frac{t\cdot \min(t-n+1,n)^2}{(t-n+1)^2} \cdot (b-a)^2
	\end{align*}
	as required by Theorem \ref{thm:mcdiarmid}.
\end{proof}

\begin{proof}[Proof of Thm.~\ref{thm:soundness of atomic monitor}]
	The soundness claim follows as a consequence of Lem.~\ref{lemma:bounded difference} and Prop.~\ref{lemma:expectation atom}. By combining Theorem \ref{thm:mcdiarmid} and Corollary 2.17 from \cite{jerison2013general} we obtain the result for POMC.
	The computational complexity is dominated by the use of the set of $n$ registers $\seq{w}$ to store the last $n$ sub-sequence of the observed path: allocation of memory for $\seq{w}$ takes $n$ space, and, after every new observation, the update of $\seq{w}$ takes $n$ write operations (Line~\ref{step:shift window}).
\end{proof}

	\end{appendix}
	\fi
\end{document}